\def\eqref#1{equation~\ref{#1}}
\def\1{\bm{1}}
\DeclareMathAlphabet{\mathsfit}{\encodingdefault}{\sfdefault}{m}{sl}
\SetMathAlphabet{\mathsfit}{bold}{\encodingdefault}{\sfdefault}{bx}{n}
\newtheorem{lemma}{Lemma}[section]
\newtheorem{assumption}{Assumption}[section]
\newtheorem{theorem}{Theorem}[section]
\newtheorem{remark}{Remark}[section]
\newcommand{\norm}[1]{\left\|#1\right\|}
\title{Efficient Discrete Physics-informed Neural Networks for Addressing Evolutionary Partial Differential Equations
}
\author{
	Siqi Chen\\
	College of Electronic and \\Information Engineering\\
	Nanjing University of \\Aeronautics and Astronautics\\
	Nanjing, China\\
	\texttt{chensq@nuaa.edu.cn}\\	
	\And
	Bin Shan\\
	College of Artificial Intelligence\\
	Nanjing University of \\Aeronautics and Astronautics\\
	Nanjing, China\\
	\texttt{bin.shan@nuaa.edu.cn}
	\And
	Ye Li\thanks{Corresponding author}\\
	College of Artificial Intelligence\\
	Nanjing University of \\Aeronautics and Astronautics\\
	Nanjing, China\\
	\texttt{yeli20@nuaa.edu.cn}\\
}
\begin{document}
\maketitle

\begin{abstract}
Physics-informed neural networks (PINNs) have shown promising potential for solving partial differential equations (PDEs) using deep learning. 
However, PINNs face training difficulties for evolutionary PDEs, particularly for dynamical systems whose solutions exhibit multi-scale or turbulent behavior over time.
The reason is that PINNs may violate the temporal causality property since all the temporal features in the PINNs loss are trained simultaneously. 
This paper proposes to use implicit time differencing schemes to enforce temporal causality, and use transfer learning to sequentially update the PINNs in space as surrogates for PDE solutions in different time frames.
The evolving PINNs are better able to capture the varying complexities of the evolutionary equations, while only requiring minor updates between adjacent time frames.
Our method is theoretically proven to be convergent if the time step is small and each PINN in different time frames is well-trained.
In addition, we provide state-of-the-art (SOTA) numerical results for a variety of benchmarks for which existing PINNs formulations may fail or be inefficient.
We demonstrate that the proposed method improves the accuracy of PINNs approximation for evolutionary PDEs and improves efficiency by a factor of 4–40x.
\end{abstract}

\section{Introduction}
Evolutionary partial differential equations (PDEs) are representative of the real world, such as the Navier–Stokes equation, Cahn–Hilliard equations, wave equation, Korteweg–De Vries equation, etc., which arise from physics, mechanics, material science, and other computational science and engineering fields \cite{dafermos2008handbook}.
Due to the inherent universal approximation capability of neural networks and the exponential growth of data and computational resources, neural network PDE solvers have recently gained popularity \cite{raissi2017physics,han2018solving,khoo2021solving,yu2018deep,sirignano2018dgm,long2018pde}.
The most representative approach among these neural network PDE solvers is Physics-Informed Neural Networks (PINNs) \cite{raissi2019physics}.
PINNs have been utilized effectively to solve PDE problems such as the Poisson equation, Burgers equation, and Navier-Stokes equation \cite{raissi2019physics,lu2021deepxde,mishra2022estimates}.
Many variants of PINNs include loss reweighting \cite{wang2021understanding,wang2022and,wang2022respecting,krishnapriyan2021characterizing}, novel optimization targets \cite{jagtap2020conservative,kharazmi2021hp}, novel architectures \cite{jagtap2020conservative,jagtap2021extended,wang2021eigenvector} and other techniques such as transfer learning and meta-learning \cite{goswami2020transfer,liu2022novel}, have also been explored to enhance training and test accuracy.

When we apply neural networks to solve evolutionary PDEs, the most ubiquitously used PINN implementation at present is the meshless, continuous-time PINN in \cite{raissi2019physics}.
However, training (i.e., optimization) is still the primary challenge when employing this approach, particularly for dynamical systems whose solutions exhibit strong non-linearity, multi-scale features, and high sensitivity to initial conditions, such as the Kuramoto-Sivashinsky equation and the Navier-Stokes equations in the turbulent regime.
Recently Wang et al. \cite{wang2022respecting} revealed that continuous-time PINNs can violate the so-called \textit{temporal causality} property, and are therefore prone to converge to incorrect solutions.
Temporal causality requires that models should be sufficiently trained at time $t$ before approximating the solution at the later time $t + \Delta t$, while continuous-time PINNs are trained for all time $t$ simultaneously.
To enhance the temporal causality in the training process, they proposed a simple re-formulation of PINNs loss functions as shown in \eqref{causul weight}, i.e., a clever weighting technique that is inversely exponentially proportional to the magnitude of cumulative residual losses from prior times.
This casual PINN method has been demonstrated to be effective for some difficult problems.
However their method is sensitive to the new causality hyper-parameter $\epsilon$, and the training time is substantially longer than vanilla PINNs. 
\begin{equation}\label{causul weight}
	\mathcal{L}(\theta) = \frac{1}{N_t}\sum_{i=1}^{N_t}w_i\mathcal{L}(t_i,\theta),
	\quad \text{with} \quad 
	w_i=\exp\left(-\epsilon\sum_{k=1}^{i-1}\mathcal{L}(t_k,\theta)\right).
\end{equation}

In this paper, we introduce a new PINN implementation technique for efficiently and precisely solving evolutionary PDEs. 
Our technique relies on two key elements: \textbf{(a)} using discrete-time PINNs instead of continuous-time PINNs to satisfy the principle of temporal causality, thereby making the training process stable and accurate; and \textbf{(b)} utilizing transfer learning to accelerate PINN training in later time frames. 
The time-differencing schemes such as forward/backward Euler, Crank-Nicolson, and Runge-Kutta  enable solutions to be learned from earlier times to later times, therefore satisfying the temporal causality principle. Moreover, the errors from time differencing can be theoretically controlled \cite{ascher2008numerical}, making the training procedure stable and accurate.
We accelerate PINN training naturally by initializing the PINN parameters at the next time frame with the trained PINN parameters at the current time frame. 
In the following sections, we will show that our transfer learning enhanced discrete physics-informed neural networks (TL-DPINN) method is theoretically and numerically stable, accurate, and efficient.

Following is a summary of the contribution of the paper.
\begin{itemize}
	\item Implicit time differencing with the transfer-learning tuned PINN provides more accurate and robust predictions of evolutionary PDEs' solutions while retaining a low computational cost.
	\item We prove theoretically the error estimation result of our TL-DPINN method, indicating that TL-DPINN solutions converge as long as the time step is small and each PINN in different time frames is well trained.
	\item Through extensive numerical results, we demonstrate that our method can attain state-of-the-art (SOTA) performance among various PINN frameworks in a trade-off between accuracy and efficiency.
\end{itemize}

\section{Related works}
\textbf{Discrete PINN. } Raissi et al. \cite{raissi2019physics} have applied the general form of Runge–Kutta methods with arbitrary $q$ stages to the evolutionary PDEs.  However, only an implicit Runge-Kutta scheme with $q=100$ stages and a single large time step $\Delta t=0.8$ are computed. Low-order methods cannot retain their predictive accuracy for large time steps. In our research, we demonstrate the capability of discrete PINNs both theoretically and experimentally, indicating that robust low-order implicit Runge-Kutta combined with PINN can obtain high-precision solutions with multiple small-sized time steps. Jagtap and Karniadakis \cite{jagtap2021extended} propose a generalized domain decomposition framework that allows for multiple sub-networks over different subdomains to be stitched together and trained in parallel. However, it is not causal and has the same training issues as conventional PINNs. The implicit Runge-Kutta scheme combined with PINN has been used to solve simple ODE systems \cite{stiasny2021learning,moya2023dae}, but not dynamic PDE systems with multi-scale or turbulent behavior over time.

\textbf{Temporal decomposition. } Diverse strategies have been studied for enhancing PINN training by splitting the domain into numerous small ``time-slab''. Wight and Zhao \cite{wight2021solving} propose an adaptive time-sampling strategy to learn solutions from the previous small time domain to the whole time domain. However, collocation points are costly to add, and the computational cost rises. This time marching strategy has been enhanced further in
\cite{krishnapriyan2021characterizing,mattey2022novel,mcclenny2023self}.
Nevertheless, causality is only enforced on the scale of the time slabs and not inside each time slab, thus the convergence can not be theoretically guaranteed. 
A unified framework for causal sweeping strategies for PINNs is summarized in \cite{penwarden2023unified}.
Wang et al. \cite{wang2022respecting} introduced a simple causal weight in the form of \eqref{causul weight} to naturally match the principle of temporal causality with high precision. However, this significantly increased computational costs and did not guarantee convergence \cite{penwarden2023unified}. Our methods can attain the same level of precision, are theoretically convergent, and are 4 to 40 times quicker.

\textbf{Transfer learning. } Transfer-learning has been previously combined with various deep-learning models for solving PDEs problems, such as PINN for phase-field modeling of fracture \cite{goswami2020transfer}, DeepONet for PDEs under conditional shift \cite{goswami2022deep}, DNN-based PDE solvers \cite{chen2021transfer}, PINN for inverse problems \cite{xu2023transfer}, one-shot transfer learning of PINN \cite{desai2022one}, and training of CNNs on multi-fidelity data \cite{song2022transfer}.
Xu et al. \cite{xu2022transfer} proposed a transfer learning enhanced DeepONet for the long-term prediction of evolution equations. However, their method necessitates a substantial amount of training data from traditional numerical methods. In contrast, our methods are physics-informed and do not require additional training data.

\section{Numerical method}
\paragraph{Problem set-up}
Here we consider the initial-boundary value problem for a general evolutionary parabolic differential equation. The extension to hyperbolic equations are straightforward.
\begin{equation}\label{eq:evo}
	\left\{\begin{aligned}
		&u_t = \mathcal{N}(u),\quad x\in\Omega, t\in[0,T],\\
		&u(0,x) = u_0(x), \quad x\in\Omega,\\
		&u(t,x) = g(t,x), \quad t\in[0,T], x\in\partial\Omega,
	\end{aligned}   
	\right.
\end{equation}
where $u(t,x)$ denotes the hidden solution, $t$ and $x$ represent temporal and spatial coordinates respectively, $\mathcal{N}(u)$ denotes a differential operator (for example, $\mathcal{N}(u)=u_{xx}$ for the simplest Heat equation), and $\Omega\subset\mathbb{R}^D$ is an open, bounded domain with smooth boundary $\partial\Omega$. 
This study assumes that the equations are dissipative in the sense that $\int_{\Omega}u \cdot \mathcal{N}(u) dx \leq 0$ \cite{xu2022transfer}. 

Our goal is to learn $u(t,x)$ by neural network approximation.
We briefly mention the basic background of PINN in Section \ref{sec_pinn} and then describe our TL-DPINN method in Section \ref{sec_tldpinn}.

\subsection{Physics-informed neural networks}\label{sec_pinn}
In the original study of PINNs \cite{raissi2019physics}, it approximates $u(t,x)$ to \eqref{eq:evo} using a deep neural network $u_\theta(t,x)$, where $\theta$ represents the neural network's parameters (e.g., weights and biases). Consequently, the objective of a vanilla PINN is to discover the $\theta$ that minimizes the physics-based loss function:
\begin{equation}
	\mathcal{L}(\theta)=\lambda_b \mathcal{L}_b(\theta)+\lambda_u \mathcal{L}_u(\theta)+\lambda_r \mathcal{L}_r(\theta),
\end{equation}
where $\mathcal{L}_b(\theta)=\frac{1}{N_b}\sum_{i=1}^{N_b} \| u_\theta(t_b^i,x_b^i)- g(t_b^i,x_b^i)\|^2$, 
$\mathcal{L}_u(\theta)=\frac{1}{N_u}\sum_{i=1}^{N_u} \| u_\theta(0,x_t^i)-u_0(x_t^i) \|^2$ and  
$\mathcal{L}_r(\theta)=\frac{1}{N_r}\sum_{i=1}^{N_r} \| \mathcal{R}(u_\theta(t_r^i,x_r^i) \|^2$.
The $t_b^i,x_b^i, x_t^i$ represent the boundary and initial sampling data for $u_\theta(t,x)$, whereas $t_r^i,x_r^i$ represent the data points utilized to calculate the residual term $\mathcal{R}(u)=u_t-\mathcal{N}(u)$. 
The coefficients $\lambda_b$, $\lambda_u$, and $\lambda_r$ in the loss function are utilized to assign a different learning rate, which can be specified by humans or automatically adjusted during training\cite{wang2021understanding,wang2022and}.
We note that the $\mathcal{L}_b$ term can be further omitted if we apply hard constraint in the PINN's design \cite{lu2021physics,liu2022unified,sukumar2022exact}.

As demonstrated in \cite{wang2022respecting}, the vanilla PINN  may violate the principle of temporal causality, as the residual loss at the later time may be minimized even if the predictions at previous times are incorrect. Figure \ref{fig:ACdemo} demonstrates the training result for solving the Allen-Chan equation, confirming this phenomenon. 
For conventional PINN, the residual loss $\mathcal{L}_r$ is quite large near the initial state and decays quickly to a small value when the learned solution is incorrect.
Comparatively, our method's residual remains small for all $t\in[0,1]$ and captures the solution with high precision.
\begin{figure}[tbp]
	\centering
	\begin{minipage}{0.31\textwidth}
		\subfloat[]{		\includegraphics[width=\textwidth]{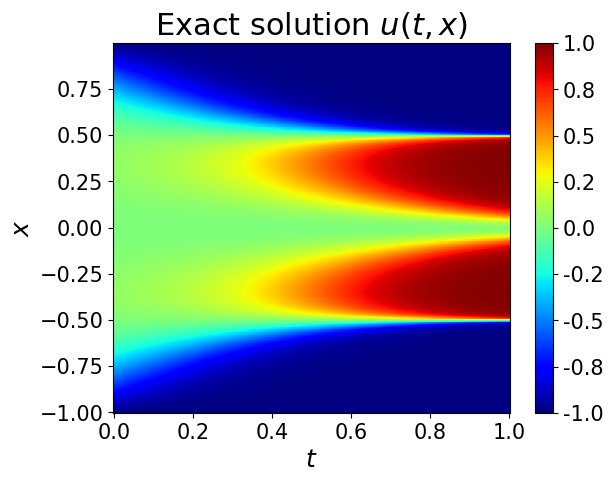}}
	\end{minipage}\quad	
	\begin{minipage}{0.3\textwidth}
		\subfloat[]{\includegraphics[width=\textwidth]{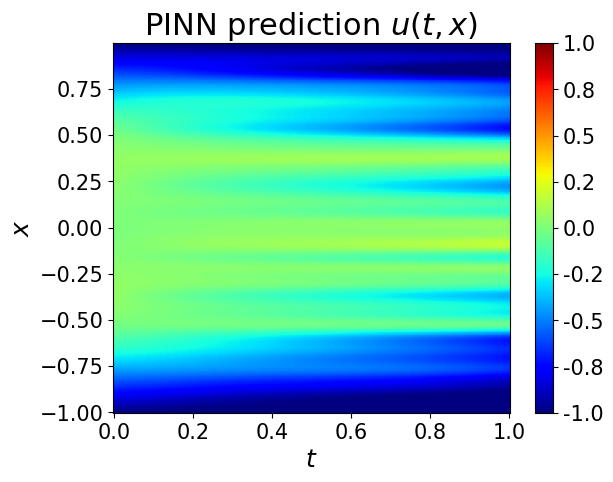}}\\
		\subfloat[]{\includegraphics[width=\textwidth]{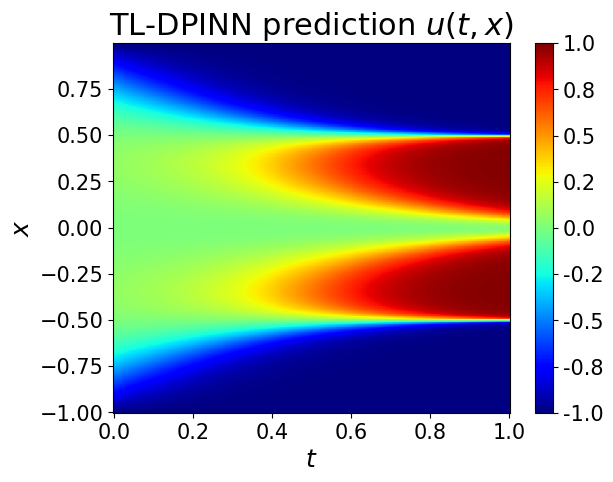}}
	\end{minipage}\quad
	\begin{minipage}{0.3\textwidth}
		\subfloat[]{\includegraphics[width=\textwidth]{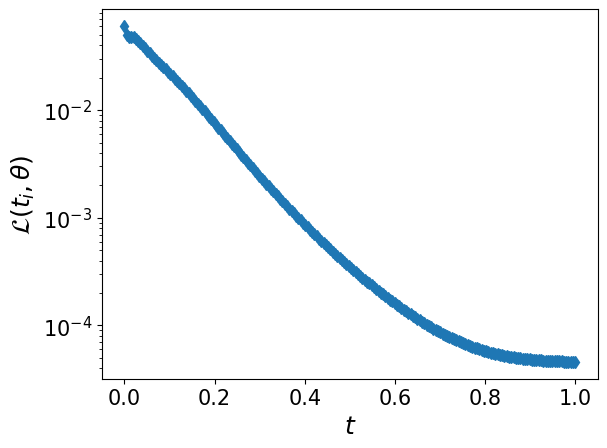}}\\
		\subfloat[]{\includegraphics[width=\textwidth]{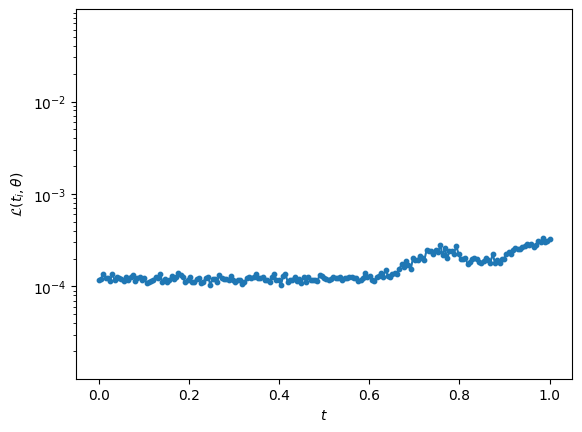}}
	\end{minipage}
	\caption{Allen-Cahn equation: (a)reference solution. (b)PINN solution. (c)TL-DPINN solution. (d)PINN's temporal residual loss $\mathcal{L}_r(t_n,\theta)$. (e)TL-DPINN's temporal residual loss $\mathcal{L}_r(t_n,\theta)$.}\label{fig:ACdemo}
\end{figure}

\subsection{Transfer learning enhanced discrete PINN}\label{sec_tldpinn}
\paragraph{Discrete PINN} Since the continuous-time PINN violates temporal causality, we shift to numerical temporal differencing schemes that naturally respect temporal causality.
Given a time step $\Delta t$, assume we have computed $u^n(x)$ to approximate the solution $u(n\Delta t,x)$ to \eqref{eq:evo}, then we consider finding $u^{n+1}(x)$ by the Crank-Nicolson time differencing scheme:
\begin{equation}\label{eq:evo-CN}
	\frac{u^{n+1}(x)-u^n(x)}{\Delta t} =  \mathcal{N}\left[\frac{u^{n+1}(x) + u^n(x)}{2}\right].
\end{equation}
Instead of solving \eqref{eq:evo} in the whole space-temporal domain directly, our goal is to solve \eqref{eq:evo-CN} from one step to the next in the space domain: $u_0(x)\mapsto u^1(x)\mapsto \cdots \mapsto u^n(x)\mapsto u^{n+1}(x)\mapsto \cdots$,
so that the evolutionary dynamics can be captured over a long time horizon.

Next, we apply PINN to solve \eqref{eq:evo-CN}. It is also called discrete PINN in \cite{raissi2019physics} when the Crank-Nicolson scheme is replaced by implicit high-order Runge-Kutta schemes.
Assuming we have obtained a neural network $u_{\theta^n}(x)$ to approximate $u(n\Delta t,x)$ in \eqref{eq:evo}, we compute $u_{\theta^{n+1}}(x)$ by finding another new $\theta^{n+1}$ that minimize the loss functions 
\begin{align}
	\mathcal{L}^{n+1}(\theta^{n+1})  = 
	&\frac{\lambda_b}{N_b}\sum_{i=1}^{N_b} \left| u_{\theta^{n+1}}(x_b^i)-g(x_b^i) \right|^2 \nonumber\\
	&+\frac{\lambda_r}{N_r}\sum_{i=1}^{N_r} \left| \frac{u_{\theta^{n+1}}(x_r^i)-u_{\theta^{n}}(x_r^i)}{\Delta t}-\mathcal{N}\left[\frac{u_{\theta^{n+1}}(x_r^i) + u_{\theta^n}(x_r^i)}{2}\right] \right|^2. \label{eq:lossn}
\end{align}
These multiple PINNs $u_{\theta^n}(x)$ take $x$ as input and output the solution values at different timestamps. 
\begin{remark}
	We remark that there exist alternative options for time differencing beyond the second-order Crank-Nicolson scheme. Several implicit Runge-Kutta schemes, including the first-order backward Euler scheme and the fourth-order Gauss-Legendre scheme, have been found to be effective. The second-order Crank-Nicolson scheme is favored due to its optimal trade-off between computational efficiency and numerical accuracy. A comprehensive exposition of these techniques is available in Appendix A.2.
\end{remark}

\paragraph{Transfer learning} The transfer learning methodology is utilized to expedite the training procedure between two adjacent PINNs.
All the PINNs $u_{\theta^n}(x)$ share the same neural network architectures with different parameters $\theta^n$.
For a small time step $\Delta t$, there are little difference between the two adjacent PINNs $u_{\theta^n}(x)$ and $u_{\theta^{n+1}}(x)$. So the parameters $\theta^{n+1}$ to be trained are very close to the trained parameters $\theta^n$.
The approach involves freezing a significant portion of the well-trained $u_{\theta^n}(x)$ and solely updating the weights in the last hidden layer through the application of a comparable physics-informed loss function \eqref{eq:lossn}. 

To be more precise, we first approximate the initial condition $u_0(x)$ by the neural network $u_{\theta^0}(x)$, then learn $u_{\theta^1}(x),u_{\theta^2}(x),\ldots$ sequentially by transfer learning.  
The general structure of our TL-DPINN method is illustrated in Algorithm 1.  
\begin{algorithm}
	\SetKwData{Left}{left}\SetKwData{This}{this}\SetKwData{Up}{up}
	\SetKwFunction{Union}{Union}\SetKwFunction{FindCompress}{FindCompress}
	\SetKwInOut{Input}{Input}\SetKwInOut{Output}{Output}
	\caption{The training procedure of our TL-DPINN method}\label{alg:algorithm}
	\Input{Target evolutionary PDE \eqref{eq:evo}; initial network $u_{\theta}$; end time $T$}
	\Output{The predicted model $u_{\theta^n}(x)$ at each timestamp $t_n$}
	\textbf{Set hyper-parameters: }timestamps number $N_t$, number of maximum training iterations $M_0,M_1$, learning rate $\eta$, threshold value $\epsilon$ \;
	\textbf{Step (a): } learn $u_{\theta^0}(x)$ by PINN \;
	\For{$i=1,2,...,M_0$}
	{
		Compute the mean square error loss $\mathcal{L}^0(\theta^0)$\;
		Update the parameter $\theta^0$ via gradient descent $\theta_{i+1}^0=\theta_i^0-\eta\nabla\mathcal{L}^0(\theta_i^0)$ \;
	}
	\textbf{Step (b): } denote $\theta^{0}_{*}=\theta^{0}_{M_0}$ and learn $u_{\theta^1}(x),...,u_{\theta^n}(x),...$ sequentially by transfer learning \;
	\For{$n=0,1,2,...,N_t-1$}
	{
		\For{$i=1,2,...,M_1$}
		{
			Compute loss $\mathcal{L}^{n+1}_i(\theta^{n+1}_i)$ by \eqref{eq:lossn} \;
			Update the parameter $\theta^{n+1}$ via gradient descent $\theta^{n+1}_{i+1}=\theta^{n+1}_i-\eta\nabla\mathcal{L}^{n+1}(\theta^{n+1}_i)$ \;
			\If{$|\mathcal{L}^{n+1}(\theta^{n+1}_{i+1})-\mathcal{L}^{n+1}(\theta^{n+1}_{i})|<\epsilon$}{denote $\theta^{n+1}_{*}=\theta^{n+1}_i$ and \textbf{break} \;} 
		}
	}
	\textbf{Return} the optimized neural network parameters $\theta^{1}_{*},\theta^{2}_{*},...,\theta^{N_t}_{*}$.
\end{algorithm}

\section{Theoretical result}
In this section, we analyze the TL-DPINN method and give an error estimate result to approximate the evolutionary differential equation \ref{eq:evo}.
We have two reasonable assumptions as follows.
\begin{assumption}\label{ass1}
	The equation \eqref{eq:evo} is dissipative, i.e. $\int_{\Omega}u \cdot \mathcal{N}(u) dx \leq 0$ for all $u(t,x)$.
	Moreover, if $\mathcal{N}$ is nonlinear, then $\int_{\Omega}(u_1-u_2) \cdot (\mathcal{N}(u_1)-\mathcal{N}(u_2)) dx \leq 0$ for all $u_1(t,x)$ and $u_2(t,x)$.
\end{assumption}
\begin{assumption}\label{ass2}
	The solution $u(t,x)$ to \eqref{eq:evo} and the neural network solution $u_{\theta^n}(x)$ to \eqref{eq:lossn} are all smooth and bounded, as well as their high order derivatives. 
\end{assumption}

The first assumption is to guarantee that the solution is not increasing over time. Consider the $L^2$ norm $\norm{u(t,\cdot)}^2 = \int_{\Omega}u(t,x)^2dx,$
we multiply \eqref{eq:evo} by $u$ and integrate in $x$ to get
$\frac{1}{2}\frac{d}{dt}\norm{u}^2(t) = \int_{\Omega}u\cdot\mathcal{N}udx \leq 0,$
so $\norm{u(t,\cdot)} \leq \norm{u_0}$ for all $t>0$.
For the simplest Heat equation with $\mathcal{N}(u)=u_{xx}$, it is easy to verify that $\int_{\Omega}u \cdot \mathcal{N}(u) dx = -\int_{\Omega}|u_x|^2dx \leq 0$, satisfying Assumption \ref{ass1}.

The second assumption can be verified by the standard regularity estimate result of PDEs \cite{evans2022partial}, and we omit it here for brevity.

Denote the symbol $\tau=\Delta t$ and $t_n=n\tau$, we show that the error can be strictly controlled by the time step $\tau$, the training loss value $\mathcal{L}^n$ and the collocation points number $N_r$. 
\begin{theorem}\label{thm:1}
	With the assumptions \eqref{ass1} and \eqref{ass2} hold, then the error between the solution $u(t_n,x)$ to \eqref{eq:evo} and the neural network solution $u_{\theta^n}(x)$ to \eqref{eq:lossn}, i.e., $e^n(x)=u(t_n,x)-u_{\theta^n}(x)$, can be estimated in the $L^2$ norm by
	\begin{equation}
		\norm{e^n} \leq C\sqrt{1+t_n}(\tau^2 + \max_{1\leq i\leq n}\sqrt{\mathcal{L}^i} + N_r^{\frac{1}{4}}),
		\quad n=1,...,N_t,
	\end{equation}
	where $C$ is a bounded constant depend on $u(t_n,x)$ and $u_{\theta^n}(x)$.
\end{theorem}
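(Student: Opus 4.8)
The plan is to run a discrete energy (stability) argument on the error $e^n(x) = u(t_n,x) - u_{\theta^n}(x)$, treating the trained network iterates as a perturbed Crank--Nicolson recursion and tracking three independent error sources: the temporal truncation error of order $\tau^2$, the optimization residual measured by $\mathcal{L}^n$, and the quadrature error of the residual loss in terms of $N_r$. First I would write down the two recursions that link $e^{n+1}$ to $e^n$. The exact solution satisfies the scheme up to a local truncation error $R^{n+1}$, i.e. $\frac{u(t_{n+1})-u(t_n)}{\tau} = \mathcal{N}[\frac{u(t_{n+1})+u(t_n)}{2}] + R^{n+1}$, where a Taylor expansion about $t_{n+1/2}$ together with the bounded-derivative part of Assumption \ref{ass2} gives $\norm{R^{n+1}} \leq C\tau^2$. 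The network iterate satisfies the same identity with a residual $r^{n+1}(x)$ equal to the integrand of the residual loss in \eqref{eq:lossn}. Subtracting yields an error recursion whose forcing is $R^{n+1}-r^{n+1}$ and whose principal part is the difference of $\mathcal{N}$ evaluated at the two midpoints.

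The core step is the energy estimate. I would test the error recursion against $\frac{e^{n+1}+e^n}{2}$ in $L^2(\Omega)$. The discrete time-derivative term telescopes into $\frac{1}{2\tau}(\norm{e^{n+1}}^2 - \norm{e^n}^2)$, while the nonlinear term becomes $\int_\Omega \frac{e^{n+1}+e^n}{2}\cdot(\mathcal{N}[\cdot]-\mathcal{N}[\cdot])\,dx$. Since the two arguments of $\mathcal{N}$ differ exactly by $\frac{e^{n+1}+e^n}{2}$, the monotonicity half of Assumption \ref{ass1} forces this term to be $\leq 0$, so it can be discarded; in the linear case the dissipativity bound plays the same role. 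What remains is $\frac{1}{2\tau}(\norm{e^{n+1}}^2 - \norm{e^n}^2) \leq \int_\Omega (R^{n+1}-r^{n+1})\cdot\frac{e^{n+1}+e^n}{2}\,dx$, and Cauchy--Schwarz reduces this to a scalar recursion in $a_n := \norm{e^n}$ of the form $a_{n+1}^2 - a_n^2 \leq \tau\,\rho^{n+1}(a_{n+1}+a_n)$ with $\rho^{n+1} := \norm{R^{n+1}-r^{n+1}}$.

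Next I would bound $\rho^{n+1}$ and accumulate. The truncation part contributes $C\tau^2$. For the optimization part, the residual loss $\mathcal{L}^{n+1}$ is the empirical mean of $|r^{n+1}(x_r^i)|^2$ over the $N_r$ collocation points; using boundedness from Assumption \ref{ass2} and a standard quadrature/Monte--Carlo estimate, the continuum norm obeys $\norm{r^{n+1}}^2 \leq C(\mathcal{L}^{n+1} + N_r^{-1/2})$, hence $\norm{r^{n+1}} \leq C(\sqrt{\mathcal{L}^{n+1}} + N_r^{-1/4})$, which is the origin of the $N_r$ exponent $-\tfrac14$. Dividing the scalar recursion by $(a_{n+1}+a_n)$ gives $a_{n+1}-a_n \leq \tau\rho^{n+1}$, and summing with $a_0 = \norm{e^0}$ controlled by the initial fit $\mathcal{L}^0$ produces $a_n \leq C\,t_n\,\max_{i\leq n}(\tau^2+\sqrt{\mathcal{L}^i}+N_r^{-1/4})$; sharpening the time dependence to the stated $\sqrt{1+t_n}$ then uses the dissipative part of Assumption \ref{ass1} to absorb the forcing--error cross term via Young's inequality, turning the per-step increment of $a_n^2$ into one uniform in $n$, so that the error accumulates only diffusively rather than linearly.

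The main obstacle I anticipate is the nonlinear midpoint term: making the cancellation rigorous requires that the two arguments of $\mathcal{N}$ differ by exactly $\frac{e^{n+1}+e^n}{2}$ so that Assumption \ref{ass1} applies verbatim, and for genuinely nonlinear $\mathcal{N}$ one must check that no uncontrolled positive contribution survives. The second delicate point is the quadrature step: passing from the empirical training loss $\mathcal{L}^{n+1}$ to the continuum norm $\norm{r^{n+1}}$ relies on the smoothness and boundedness of Assumption \ref{ass2} to control the integration error by $N_r^{-1/2}$, and it is this step---rather than the telescoping---that dictates the final $N_r^{-1/4}$ and fixes how the constant $C$ depends on the solution and the network.
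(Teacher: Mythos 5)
Your proof is correct, and it uses the same analytical ingredients as the paper's: a midpoint energy estimate in which Assumption \ref{ass1} kills the difference of $\mathcal{N}$-terms (the two arguments do differ by exactly $\tfrac{1}{2}(e^{n+1}+e^n)$, so the monotonicity hypothesis applies verbatim), a Taylor expansion about $t_{n+1/2}$ for the $O(\tau^2)$ truncation error, a Monte--Carlo quadrature bound $\norm{r^{n+1}}^2 \lesssim \mathcal{L}^{n+1}/\lambda_r + N_r^{-1/2}$, and a discrete Gronwall-type recursion. The structural difference is that you run a single one-pass recursion on the total error $e^n$ with combined forcing $R^{n+1}-r^{n+1}$, whereas the paper splits $e^n = \xi^n + \eta^n$ with $\xi^n = u(t_n,\cdot)-u^n$ and $\eta^n = u^n - u_{\theta^n}$, proves Lemma \ref{lem1} (classical Crank--Nicolson convergence, $\norm{\xi^n}\leq C\sqrt{t_n}\,\tau^2$) and Lemma \ref{lem2} (PINN residual error, $\norm{\eta^n}\leq C\sqrt{t_n}(\max_i\sqrt{\mathcal{L}^i}+N_r^{-1/4})$) separately via the identical energy argument, and concludes by the triangle inequality. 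The paper's split isolates a textbook numerical-analysis statement from the learning-specific one, at the cost of introducing the intermediate semi-discrete solution $u^n(x)$ (whose existence and regularity are then implicitly assumed); your version avoids that intermediate object entirely, applies the energy argument once instead of twice, and more naturally accommodates an imperfect initial fit through $a_0=\norm{e^0}$, where the paper simply sets $\xi^0=\eta^0=0$. Your final sharpening step is also the right one: it is precisely the Young-inequality treatment of the cross term (rather than dividing by $a_{n+1}+a_n$) that converts linear accumulation $t_n\max_i\rho^i$ into the diffusive $\sqrt{t_n}\max_i\rho^i$, which is how the paper obtains its $\sqrt{1+t_n}$ prefactor via Lemma \ref{lem3}. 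One small point in your favor: your exponent $N_r^{-1/4}$ is the correct one; the paper's statement writes $N_r^{\frac{1}{4}}$, which is evidently a sign typo since its own quadrature estimate produces $N_r^{-1/2}$ at the level of squared norms.
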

The proof of Theorem \ref{thm:1} can be found in Appendix A.3.


\section{Computational results}
This section compares the accuracy and training efficiency of the TL-DPINN approach to existing PINN methods using various key evolutionary PDEs, including the Reaction-Diffusion (RD) equation, Allen-Cahn (AC) equation, Kuramoto–Sivashinsky (KS) equation, Navier-Stokes (NS) equation.
All the code is implemented in JAX \cite{jax2018github}, a framework that is gaining popularity in scientific computing and deep learning. In all examples, the activation function is $\tanh(\cdot)$ and the optimizer is Adam \cite{kingma2014adam}. 
Appendix A.4.1 discusses the Fourier feature embedding and modified fully-connected neural networks used in \cite{wang2022respecting}.
Appendix A.4.2 details the error metric, neural network hyper-parameters, and training approach.

The Crank-Nicolson time differencing is denoted as TL-DPINN1, while the Gauss-Legendre time differencing is denoted as TL-DPINN2. Our study involves a comparison of these methods with several robust baselines: 1) original PINN \cite{raissi2019physics}; 2) adaptive sampling \cite{wight2021solving}; 3)  self-attention \cite{mcclenny2023self}; 4) time marching \cite{mattey2022novel} and 5) causal PINN \cite{wang2022respecting}
Table~\ref{tab_results} summarizes a comparison of the relative $L^{2}$ error and running time (seconds) for different equations by different methods. 
We note that all neural networks are trained on an NVIDIA GeForce RTX 3080 Ti graphics card.

\begin{table}
	\centering
	\caption{A comparison of the relative $L^{2}$ error and training time (seconds) for different PDEs.}
	\label{tab_results}
	\scalebox{0.92}{
		\begin{tabular}{ccccccccc}
			\toprule
			\multirow{2}*{Method}     & \multicolumn{2}{c}{RD Eq.}                    & \multicolumn{2}{c}{AC Eq.}                    & \multicolumn{2}{c}{KS Eq.}           & \multicolumn{2}{c}{NS Eq.}                    \\  
			~    & \multicolumn{1}{c}{$L^{2}$ error} & time  & \multicolumn{1}{c}{$L^{2}$ error} & time  & \multicolumn{1}{c}{$L^{2}$ error} & time  & \multicolumn{1}{c}{$L^{2}$ error} & time \\ \midrule
			Original PINN             & \multicolumn{1}{c}{4.17e-02}      & 1397   & \multicolumn{1}{c}{8.23e-01}      & 1412   & \multicolumn{1}{c}{1.00e+00}      & -  & \multicolumn{1}{c}{1.32e+00}             & -    \\
			Adaptive sampling    & \multicolumn{1}{c}{1.65e-02}      & 1561  & \multicolumn{1}{c}{8.64e-03}      & 1460  & \multicolumn{1}{c}{9.98e-01}      & 6901  & \multicolumn{1}{c}{8.45e-01}             & 25385    \\
			Self-attention            & \multicolumn{1}{c}{1.14e-02}      & 1450  & \multicolumn{1}{c}{1.05e-01}      & 1770  & \multicolumn{1}{c}{8.22e-01}      & 5415  & \multicolumn{1}{c}{9.28e-01}             & 21296    \\
			Time marching            & \multicolumn{1}{c}{3.98e-03}             & 3215     & \multicolumn{1}{c}{2.01e-02}             & 3715     & \multicolumn{1}{c}{8.02e-01}             & 5527     & \multicolumn{1}{c}{8.85e-01}             & 26200    \\ 
			Causal PINN  & \multicolumn{1}{c}{3.99e-05}      & \textcolor{red}{7358} & \multicolumn{1}{c}{1.66e-03}      & \textcolor{red}{9264} & \multicolumn{1}{c}{4.16e-02}      & \textcolor{red}{22029} & \multicolumn{1}{c}{4.73e-02}             & \textcolor{red}{5 days}    \\\midrule
			TL-DPINN1 (ours) & \multicolumn{1}{c}{\textbf{1.82e-05}}      & 1463  & \multicolumn{1}{c}{\textbf{5.92e-04}}      & 2328  & \multicolumn{1}{c}{\textbf{7.17e-03}}      & \textbf{5050}  & \multicolumn{1}{c}{\textbf{3.44e-02}}             & \textbf{12440}    \\
			TL-DPINN2 (ours) & \multicolumn{1}{c}{9.34e-05}      & \textbf{748}  & \multicolumn{1}{c}{9.82e-04}      & \textbf{1100}  & \multicolumn{1}{c}{3.55e-02}             & 5171     & \multicolumn{1}{c}{3.66e-02}             & 56875    \\ \bottomrule
	\end{tabular}}
\end{table}

\subsection{Reaction-Diffusion equation}
This study begins with the Reaction-Diffusion (RD) equation, which is significant to nonlinear physics, chemistry, and developmental biology. We consider the one-dimensional Reaction-Diffusion equation with the following form:
\begin{eqnarray}
	\left\{
	\begin{array}{l}
		u_t = d_1 u_{xx} + d_2 u^2, \quad t\in[0,1],x\in[-1,1], \\
		u(0,x) =  \sin(2\pi x)(1+\cos(2\pi x)), \\
		u(t,-1) = u(t, 1)=0,
	\end{array}
	\right.
\end{eqnarray}
where $d_1=d_2=0.01$. The solution changes slowly over time, and Table~\ref{tab_results} demonstrates that all methods succeed with small relative $L^2$ norm error in this instance.
Our methods enhance accuracy by 2~3 orders of magnitude compared to other PINN frameworks \cite{raissi2019physics,wight2021solving,mcclenny2023self,mattey2022novel} even with less training time.
We see that our method TL-DPINN1 is more accurate than causal PINN  \cite{wang2022respecting} with much less computational time.
We acknowledge that our methods TL-DPINN2 may be slightly less accurate than causal PINN, but the training time is only nearly $1/10$ of their method.
In fact, the casual PINN can only achieve a relative $L^2$ error of $1.13e-01$ if we stop early at the training time of our methods ( $748$ seconds).
Figure~\ref{fig_contrast_RD} shows the predicted solution against the reference solution, and our proposed method achieves a relative $L^{2}$ error of $1.82e-05$. 
More computational results of the RD equation are provided in Appendix A.4.3.

\begin{figure}
	\centering
	\includegraphics[width=1.0\textwidth]{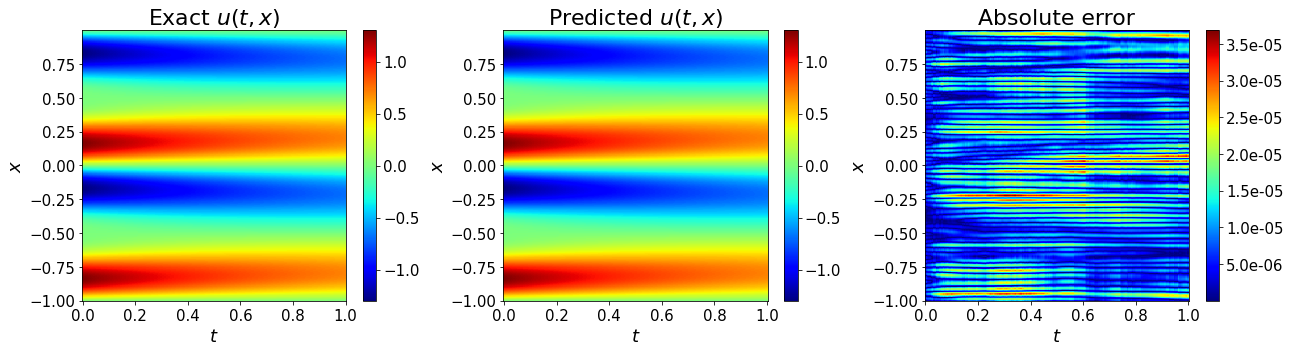}
	\caption{Comparison between the reference and predicted solutions for the Reaction-Diffusion equation, and the $L^{2}$ error is $1.82e-05$.}
	\label{fig_contrast_RD}
\end{figure}

\subsection{Allen-Cahn equation}
We consider the one-dimensional Allen-Cahn (AC) equation, a benchmark problem for PINN training \cite{wight2021solving,mattey2022novel,wang2022respecting}:
\begin{eqnarray}
	\left\{
	\begin{array}{l}
		u_t = \gamma_1u_{xx} + \gamma_2u(1-u^2), \quad t\in[0,1],x\in[-1,1], \\
		u(x,0) = u_0(x), \\
		u(t,-1) = u(t,1), \quad		u_x(t,-1) = u_x(t,1).
	\end{array}
	\right.
\end{eqnarray}
where $\gamma_1=0.0001$, $\gamma_2=5$ and $u_0(x)=x^2\cos(\pi x)$. 
For the original PINN, the Allen-Cahn equation is hard to solve, but our approach performs well in accuracy and training efficiency.
Figure~\ref{fig:ACdemo} compares the predicted solution to the reference solution. Our technique achieves a relative $L^{2}$ error of $5.92e-04$. Figure~\ref{fig_train_AC} shows how the $L^{2}$ error evolves and how many training epochs are needed at different timestamps. The $L^{2}$ error increases as the AC equation develops more complicated. Each timestamp's training epoch is small across the time domain, reducing training time.
More computational results of the AC equation are provided in Appendix A.4.4.

\begin{figure}
	\centering
	\subfloat[Relative $L^2$ errors]{\includegraphics[width=0.38\textwidth]{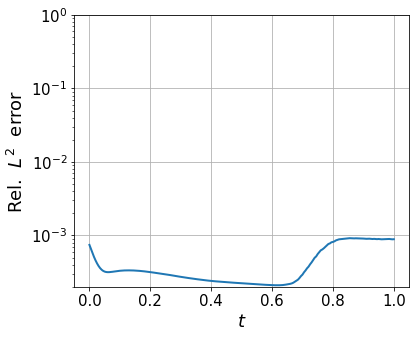}}
	\qquad\qquad
	\subfloat[Training epochs]{\includegraphics[width=0.38\textwidth]{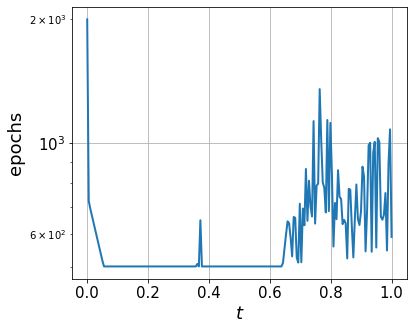}}
	\caption{Training results for the Allen-Cahn equation.}
	\label{fig_train_AC}
\end{figure}

\subsection{Kuramoto–Sivashinsky equation}
The Kuramoto-Sivashinsky (KS) equation is used to model the diffusive–thermal instabilities in a laminar flame front. 
Existing PINN frameworks are challenging to solve the KS equation as the solution exhibits fast transit and chaotic behaviors \cite{raissi2018deep}. 
The KS equation takes the form 
\begin{eqnarray}
	\left\{
	\begin{array}{l}
		u_t + \alpha uu_x + \beta u_{xx} + \gamma u_{xxxx} = 0, \\
		u(0, x) = u_0(x),
	\end{array}
	\right.
\end{eqnarray}
with periodic boundary conditions. Here $\alpha = 5$, $\beta = 0.5$, $\gamma = 0.005$, and the initial condition $u_{0}(x)=-\sin(\pi x)$. Figure~\ref{fig_contrast_KSr} visualizes the predicted solution against the reference solution, and our proposed method achieves a relative $L^{2}$ error of $7.17e-03$. From $t=0.4$, the reference solution begins to quickly transition, and our method is able to capture this feature.
More computational results of the KS equation are provided in Appendix A.4.5.

\begin{figure}
	\centering
	\includegraphics[width=1.0\textwidth]{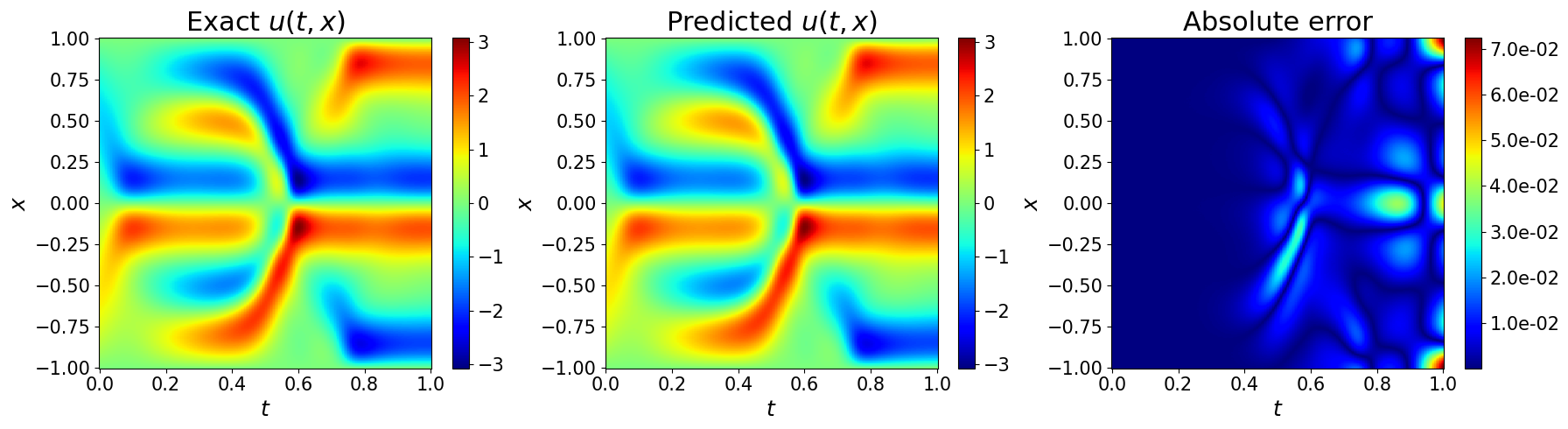}
	\caption{Comparison between the reference and predicted solutions for the Kuramoto–Sivashinsky equation, and the $L^{2}$ error is $7.17e-03$.}
	\label{fig_contrast_KSr}
\end{figure}

\subsection{Navier-Stokes equation}
We consider the 2D Navier-Stokes (NS) equation in the velocity-vorticity form \cite{wang2022respecting}
\begin{eqnarray}
	\left\{
	\begin{array}{l}
		w_t+\boldsymbol{u}\cdot\nabla w=\frac{1}{\mathrm{Re}}\Delta w,\quad \rm in \ [0,T]\times \Omega, \\
		\nabla \cdot \boldsymbol{u}=0,\quad \rm in \ [0,T]\times \Omega, \\
		w(0,x,y)=w_0(x,y),\quad \rm in \ \Omega.
	\end{array}
	\right.
\end{eqnarray}
with periodic boundary conditions.
Here, $\mathbf{u}=(u, v)$ represents the flow velocity field, $w=\nabla\times u$ represents the vorticity, and $\mathrm{Re}$ is the Reynolds number. 
In addition, $\Omega$ is set to $[0, 2\pi]^2$ and $\mathrm{Re}$ is set to 100. 
Figure~\ref{fig_contrast_NS} presents the predicted solution of $w(t,x,y)$ compared to the reference solution. 
Our proposed method can obtain a result similar to that in~\cite{wang2022respecting}, while the training time is only 1/58 of their method. 
More computational results of the NS equation are provided in Appendix A.4.6. 

\begin{figure}
	\centering
	\includegraphics[width=1.0\textwidth]{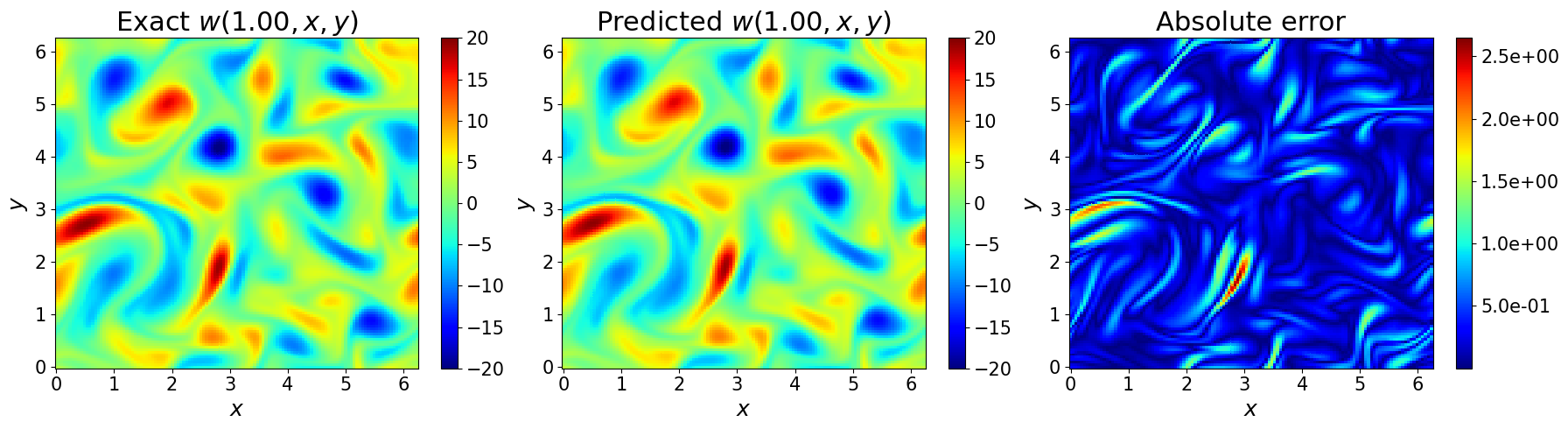}
	\caption{Comparison between the reference and predicted solutions of $w(t, x, y)$ for the Navier-Stokes equation at $t=1.0$, and the $L^{2}$ error is $3.44e-02$.}
	\label{fig_contrast_NS}
\end{figure}

\subsection{Ablation study}
We conduct ablation studies on the relatively simpler RD Eq. and AC Eq. to ablate the main designs in our algorithm.

\paragraph{Time differencing scheme study.}
Numerous time differencing schemes have been developed in the last decades. We list some commonly used schemes in Appendix A.2. 
We do experiments on different time differencing schemes to validate that implicit time differencing schemes (2nd Crank-Nicolson or 4th Gauss-Legendre) are more stable and lead to better performance.
The results are given in Table~\ref{tab_ab1}.

\paragraph{Transfer learning study.}
To see weather the transfer learning part is effective, we do ablation studies without using transfer learning. Besides, since our strategy of transfer learning is to fine tune all the network parameters, we also do experiments to fine tune the last 1/2/3 layers of the network. The results are given in Table~\ref{tab_ab2}. We can see that transfer learning is effective both in the efficiency and accuracy of our method.

\begin{figure}[htp]
	\centering
	\begin{minipage}{.48\textwidth}
		\captionof{table}{Time differencing scheme study}\label{tab_ab1}
		\scalebox{0.8}{
			\begin{tabular}{ccccc}
				\toprule
				\multirow{2}*{Method} & \multicolumn{2}{c}{RD Eq.} & \multicolumn{2}{c}{AC Eq.} \\  
				~    & \multicolumn{1}{c}{$L^{2}$ error} & time  & \multicolumn{1}{c}{$L^{2}$ error} & time \\ \midrule
				Forward Euler & \multicolumn{1}{c}{1.32e-03}      & 208    & \multicolumn{1}{c}{9.57e-03}      & 304 \\
				Backward Euler & \multicolumn{1}{c}{2.74e-03} & 206  & \multicolumn{1}{c}{1.64e-02} & 444 \\
				2nd RK & \multicolumn{1}{c}{1.97e-03}      & 761  & \multicolumn{1}{c}{1.17e-03} & 1054 \\
				4th RK & \multicolumn{1}{c}{2.11e-03} & 1187 & \multicolumn{1}{c}{1.31e-03} & 1779 \\ \midrule 
				TL-DPINN1 & \multicolumn{1}{c}{\textbf{1.82e-05}} & 1463  & \multicolumn{1}{c}{\textbf{5.92e-04}} & 2328 \\
				TL-DPINN2 & \multicolumn{1}{c}{9.34e-05} & 748  & \multicolumn{1}{c}{9.82e-04} & 1100 \\ \bottomrule	
		\end{tabular}}
	\end{minipage}
\quad
	\begin{minipage}{.48\textwidth}
		\captionof{table}{Transfer learning study}\label{tab_ab2}
		\scalebox{0.8}{
			\begin{tabular}{ccccc}
				\toprule
				\multirow{2}*{Method}     & \multicolumn{2}{c}{RD Eq.}                    & \multicolumn{2}{c}{AC Eq.}\\  
				~    & \multicolumn{1}{c}{$L^{2}$ error} & time  & \multicolumn{1}{c}{$L^{2}$ error} & time \\ \midrule
				Without TL            & \multicolumn{1}{c}{4.01e-04}     & 5880  & \multicolumn{1}{c}{1.35e-02}      & 9170 \\
				last layer   & \multicolumn{1}{c}{3.31e-04}      & 638  & \multicolumn{1}{c}{1.01e-02}      & 3624  \\
				last 2 layers  & \multicolumn{1}{c}{3.22e-04}     & 221  & \multicolumn{1}{c}{1.01e-02}      & 4029 \\
				last 3 layers  & \multicolumn{1}{c}{4.08e-04}     & 232     & \multicolumn{1}{c}{1.01e-02}             & 4685 \\ \midrule 
				TL-DPINN1 & \multicolumn{1}{c}{\textbf{1.82e-05}} & 1463  & \multicolumn{1}{c}{\textbf{5.92e-04}} & 2328 \\
				TL-DPINN2 & \multicolumn{1}{c}{9.34e-05} & 748  & \multicolumn{1}{c}{9.82e-04} & 1100 \\ \bottomrule	
		\end{tabular}}
	\end{minipage}
\end{figure}

\paragraph{Repeated test.}
To further demonstrate the well-performance of our TL-DPINN method through accuracy and efficiency, we do 5 random runs for RD and AC Eq. by casual PINN and our method for comparison.
The results are given in Table~\ref{tab_ab3}.

\begin{table}[h]
	\centering
	\caption{Repeated test.}
	\label{tab_ab3}
\begin{tabular}{ccccc}
	\toprule
	\multirow{2}*{Method} & \multicolumn{2}{c}{RD Eq.} & \multicolumn{2}{c}{AC Eq.} \\  
	~    & \multicolumn{1}{c}{$L^{2}$ error} & time  & \multicolumn{1}{c}{$L^{2}$ error} & time \\ \midrule
	Causal PINN & \multicolumn{1}{c}{3.73e-05 ± 4.66e-06} & 7207 ± 219 & \multicolumn{1}{c}{1.51e-03 ± 2.12e-04} & 9060 ± 341 \\  
	TL-DPINN1 & \multicolumn{1}{c}{\textbf{1.76e-05 ± 1.06e-06}} & 1463 ± 53  & \multicolumn{1}{c}{\textbf{6.08e-04 ± 3.06e-05}} & 2328 ± 89 \\
	TL-DPINN2 & \multicolumn{1}{c}{9.89e-05 ± 8.94e-06} & 811 ± 122  & \multicolumn{1}{c}{9.29e-04 ± 8.06e-05} & 1291 ± 178 \\ \bottomrule	
\end{tabular}
\end{table}

\subsection{Training efficiency}
Table~\ref{tab_computional_cost} illustrates how the computation efficiency is affected by different time discretization methods on different equations. In addition, the casual PINN method is also compared.
All neural networks are trained on an NVIDIA GeForce RTX 3080 Ti graphics card.
We note that the total training epochs of our methods are not fixed due to the stopping criterion (see Algorithm 1).
The training efficiency in Table~\ref{tab_computional_cost} is consistent with the training time in Table~\ref{tab_results}.

\begin{table}[h]
	\centering
	\caption{A comparison of training efficiency for different equations.}
	\label{tab_computional_cost}
	\begin{tabular}{ccccc}
		\toprule
		\multirow{2}*{Method}  &  \multicolumn{4}{c}{Training efficiency (epochs/sec.)}  \\
		~  &  Reaction-Diffusion  &  Allen-Cahn  &  Kuramoto-Sivashinsky  &  Navier-Stokes \\\midrule
		Casual PINN  &  61.70  &  52.33  &  26.24  &  2.77 \\
		TL-DPINN1  &  \textbf{439.37}  &  \textbf{384.47}  &  \textbf{259.20}  &  \textbf{8.32} \\
		TL-DPINN2  &  276.40  &  239.52  &  127.55  &  6.37 \\\bottomrule
	\end{tabular}
\end{table}

\section{Conclusion}
In this paper, we propose a method for solving evolutionary partial differential equations via transfer-learning enhanced discrete physics-informed neural networks (TL-DPINN). 
The discrete PINNs were thought to be time-consuming and seldom applied in the PINNs literature.
We contribute to the PINN community by rediscovering the good performance of the discrete PINNs applied to solve evolutionary PDEs, both theoretically and numerically. 
Our method first employs a classical numerical implicit time differencing scheme to produce a series of stable propagation equations in space, and then applies PINN approximation to sequentially solve. 
Transfer learning is used to reduce computational costs while maintaining precision.
We demonstrate the convergence property, accuracy, and computational effectiveness of our TL-DPINN method both theoretically and numerically.
Our proposed method achieves state-of-the-art results among different PINN frameworks while significantly reducing the computational cost.

\newpage
{
	\bibliographystyle{unsrt}
	\bibliography{main}
}

\newpage
\appendix
\section{Appendix}
\subsection{Table of Notations}
A table of notations is given in Table 1.
\begin{table}[htp]
	\label{tab:notation}
	\centering
	\caption{Table of notations}
	\begin{tabular}{ll}
		\toprule
		Notation & Meaning \\
		\midrule
		PINN & Physics-informed neural network \\
		PDE & Partial differential equation \\
		TL-DPINN & Transfer learning enhanced discrete PINN \\
		TL-DPINN1 & Crank-Nicolson time differencing in TL-DPINN \\
		TL-DPINN1 & Gauss-Legendre time differencing in TL-DPINN \\
		$\mathcal{L}$ or $\mathcal{L}^n$ & Physics-informed loss function \\
		$\mathcal{N}$ & Differential operator, such as $\mathcal{N}(u)=u_{xx}$ \\
		$\mathcal{R}$ & The residual term of the evolutionary PDE, for example $\mathcal{R}(u)=u_t-u_{xx}$ \\
		$\Omega$ & Spatial domain \\
		$\partial\Omega$ & The boundary of the spatial domain \\
		$T$ & End time \\
		$N_t$ & Timestamps number \\
		$N_b$ & The collocation points number on $\partial\Omega$ \\
		$N_u$,$N_r$ & The collocation points number in $\Omega$ or $\Omega\times[0,T]$ \\
		$u(t,x)$ & The exact solution to the evolutionary PDE \\
		$u^n(x)$ & The time differencing scheme solution to the evolutionary PDE \\
		$u_{\theta^n}(x)$ & The discrete PINN solution to the evolutionary PDE \\
		$h_j$ & The $j$ component in the output of the last hidden layer of the neural network \\ 
		$x$ , $x_r$ , $x_b$ & Spatial coordinate \\
		$t$ or $t_n$ & Temporal coordinate \\
		$\theta$ or $\theta^n$, $W^n$, $w^n$ & Neural network parameters \\
		$\Delta t$ or $\tau$ & Time step, the interval time between two adjacent timestamps \\
		$M_0$, $M_1$ & Number of maximum iterations in different training stages \\
		$\eta$ & The learning rate in gradient descent methods \\
		$\epsilon$ & The threshold value \\
		$\norm{\cdot}$ & The $L^2$ norm of a function, defined by $\norm{f}=\left(\int_{\Omega}|f(x)|^2dx\right)^{\frac{1}{2}}$ \\
		\bottomrule
	\end{tabular}
\end{table}

\subsection{Time Differencing Schemes}
\subsubsection{Explicit schemes}
First-order forward Euler scheme:
\begin{equation}
	\frac{u^{n+1}(x)-u^n(x)}{\Delta t} =  \mathcal{N}\left[u^{n}(x)\right].
\end{equation}

Second-order explicit Runge-Kutta (2nd RK) scheme:
\begin{equation}
	\frac{u^{n+1}(x)-u^n(x)}{\Delta t} =  \mathcal{N}\left[u^{n}(x) + \frac{\Delta t}{2}\mathcal{N}[u^{n}(x)]\right].
\end{equation}

Fouth-order explicit Runge-Kutta (4th RK) scheme:
\begin{align}
	& \frac{u^{n+1}(x)-u^n(x)}{\Delta t} = \frac{1}{6}\left[k_1(x) + 2k_2(x) + 2k_3(x) + k_4(x)\right],\\
	& k_1(x) = \mathcal{N}[u^n(x)], \\
	& k_2(x) = \mathcal{N}\left[u^n(x) + \frac{\Delta t}{2}\mathcal{N}[k_1(x)] \right], \\
	& k_3(x) = \mathcal{N}\left[u^n(x) + \frac{\Delta t}{2}\mathcal{N}[k_2(x)] \right], \\
	& k_4(x) = \mathcal{N}\left[u^n(x) + \Delta t\mathcal{N}[k_3(x)] \right].
\end{align}

\subsubsection{Implicit schemes}
First-order backward Euler scheme:
\begin{equation}
	\frac{u^{n+1}(x)-u^n(x)}{\Delta t} =  \mathcal{N}\left[u^{n+1}(x)\right].
\end{equation}

Second-order Trapezoidal scheme:
\begin{equation}
	\frac{u^{n+1}(x)-u^n(x)}{\Delta t} =  \frac{\mathcal{N}[u^{n+1}(x)]+ \mathcal{N}[u^{n+}(x)]}{2}.
\end{equation}

Second-order Crank-Nicolson scheme (used in TL-DPINN1):
\begin{equation}
	\frac{u^{n+1}(x)-u^n(x)}{\Delta t} =  \mathcal{N}\left[\frac{u^{n+1}(x)+u^{n+}(x)}{2}\right].
\end{equation}

Forth-order Gauss-Legendre scheme (used in TL-DPINN2):
\begin{align}
	& \frac{u^{n+1}(x)-u^n(x)}{\Delta t} = \frac{k_1(x) + k_2(x)}{2},\\
	& k_1(x) = \mathcal{N}\left[u^n(x) + \frac{1}{4}\Delta t k_1(x) + \left(\frac{1}{4}+\frac{\sqrt{3}}{6}\right)\Delta t k_2(x)\right], \\
	& k_2(x) = \mathcal{N}\left[u^n(x) + \left(\frac{1}{4}-\frac{\sqrt{3}}{6}\right)\Delta t k_1(x) +\frac{1}{4}\Delta t k_2(x). \right]
\end{align}

The general form of Runge–Kutta schemes with $q$ stages:
\begin{align}
	& \frac{u^{n+1}(x)-u^n(x)}{\Delta t} = \sum_{i=1}^q b_i k_i(x),\\
	& k_i(x) = \mathcal{N}\left[u^n(x) + \Delta t \sum_{j=1}^q a_{ij} k_j(x)\right],\; i=1,...,q.
\end{align}
where the coefficients $\{a_{ij},b_i\}$ are determined.
Since there are no significant differences for PINN approximation of explicit schemes (i.e. $a_{ij}=0$ for all $j\geq i$) and implicit schemes (i.e. not all $a_{ij}=0$ for $j\geq i$),
we prefer implicit schemes as they possess the A-stable property to make the time-marching process stable \cite{butcher2007runge}.  

\subsection{Theoretical Analysis}
\subsubsection{Proof of Theorem 4.1}
\begin{proof}
	We split the error $e^n(x)=u(t_n,x)-u_{\theta^n}(x)$ into two parts:
	\begin{equation}
		e^n(x) \;=\; \mathop{\underline{u(t_n,x)-u^n(x)}}\limits_{\circeq\xi^n(x)} + \mathop{\underline{u^n(x)-u_{\theta^n}(x)}}\limits_{\circeq\eta^n(x)}
	\end{equation}
	The first term $\xi^n(x)$ estimates the error from the Crank-Nicolson time differencing schemes.
	From Lemma \ref{lem1} we have $\norm{\xi^n} \leq C\tau^2$. 
	The second term $\eta(x)$ estimates the error from the PINN approximation in space and the cumulative effect of time.
	From Lemma \ref{lem2} we have $\norm{\eta^n} \leq C\sqrt{t_n}(\max\limits_{1\leq i\leq n}\sqrt{\mathcal{L}^i} + N_r^{\frac{1}{4}})$.
	Then by the triangular inequality, we finish the proof.
\end{proof}

\subsubsection{Some Lemmas in the Proof of Theorem 4.1}
\begin{lemma}\label{lem1}
	Denote $\xi^n(x) = u(t_n,x)-u^n(x)$, where $u(t_n,x)$ is the exact solution to evolutionary PDEs and $u^n(x)$ is the Crank-Nicolson time differencing discrete solution, then we have the estimate
	\begin{equation}
		\norm{\xi^n} \leq C\tau^2,
	\end{equation}
	for some constant $C$ independent of time step $\tau$, collocation points number $N_r$ and trained loss value $\mathcal{L}^n$.
\end{lemma}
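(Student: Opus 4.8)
The plan is to follow the classical consistency-plus-stability route for second-order convergence of the Crank--Nicolson scheme, the one nonstandard ingredient being that the nonlinear stability is extracted directly from the monotonicity hypothesis in Assumption \ref{ass1}. First I would define the local truncation error $\sigma^n(x)$ by inserting the exact solution into the scheme,
\begin{equation}
	\sigma^n(x) = \frac{u(t_{n+1},x)-u(t_n,x)}{\tau} - \mathcal{N}\left[\frac{u(t_{n+1},x)+u(t_n,x)}{2}\right].
\end{equation}
Expanding both terms in a Taylor series about the midpoint $t_{n+1/2}$ and using that the exact solution satisfies $u_t=\mathcal{N}(u)$ together with the smoothness and boundedness of $u$ and its derivatives (Assumption \ref{ass2}), I would show that the centered difference and the averaged argument each agree with their midpoint value up to $O(\tau^2)$, so that $\norm{\sigma^n}\leq C\tau^2$ with $C$ controlled by bounds on the low-order time derivatives of $u$.

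Next I would derive the error recursion. Subtracting the scheme satisfied by $u^{n+1}$ from the defining relation for $\sigma^n$ gives
\begin{equation}
	\frac{\xi^{n+1}-\xi^n}{\tau} = \mathcal{N}\left[\frac{u(t_{n+1})+u(t_n)}{2}\right] - \mathcal{N}\left[\frac{u^{n+1}+u^n}{2}\right] + \sigma^n.
\end{equation}
Testing this identity against $\tfrac{1}{2}(\xi^{n+1}+\xi^n)$ in the $L^2$ inner product, the left-hand side telescopes into $\tfrac{1}{2\tau}(\norm{\xi^{n+1}}^2-\norm{\xi^n}^2)$. The key observation is that the two arguments of $\mathcal{N}$ differ exactly by $\tfrac{1}{2}(\xi^{n+1}+\xi^n)$, so the monotonicity part of Assumption \ref{ass1} forces the $\mathcal{N}$-contribution to be non-positive; in the linear case the first (dissipativity) clause plays the same role. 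Applying Cauchy--Schwarz to the remaining $\sigma^n$ term and cancelling the common factor $\tfrac{1}{2}(\norm{\xi^{n+1}}+\norm{\xi^n})$ from the resulting difference of squares yields the clean one-step bound $\norm{\xi^{n+1}}\leq\norm{\xi^n}+\tau\norm{\sigma^n}$.

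Finally, since the initial condition is matched exactly we have $\xi^0=0$, so summing the one-step bound over the $n$ steps gives $\norm{\xi^n}\leq\tau\sum_{k=0}^{n-1}\norm{\sigma^k}\leq n\tau\cdot C\tau^2 = Ct_n\tau^2$, and absorbing $t_n\leq T$ into the constant produces the claimed estimate $\norm{\xi^n}\leq C\tau^2$. The step I expect to be the main obstacle is the stability estimate for nonlinear $\mathcal{N}$: a naive Lipschitz bound on the $\mathcal{N}$-difference would reintroduce a term proportional to $\norm{\xi^{n+1}}+\norm{\xi^n}$ that must be absorbed by a discrete Gr\"onwall argument and could degrade the constant, whereas the monotonicity assumption lets me discard that term outright and keep the one-step bound sharp. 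A secondary technical point is verifying that the Taylor remainders in $\sigma^n$ are genuinely $O(\tau^2)$ when $\mathcal{N}$ is a differential operator, which hinges on the boundedness of the relevant high-order spatial derivatives guaranteed by Assumption \ref{ass2}.
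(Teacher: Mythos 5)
Your proposal is correct, and its skeleton coincides with the paper's own proof: Taylor expansion about the midpoint $t_{n+\frac{1}{2}}$ for consistency, subtraction to obtain the error recursion, testing against $\tfrac{1}{2}(\xi^{n+1}+\xi^n)$, and invoking the monotonicity clause of Assumption~\ref{ass1} to discard the $\mathcal{N}$-difference term after observing that the two arguments of $\mathcal{N}$ differ exactly by $\tfrac{1}{2}(\xi^{n+1}+\xi^n)$. The one place you genuinely diverge is the accumulation step, and your variant is arguably cleaner. The paper bounds the truncation term via Young's inequality, $\int_{\Omega}\mathcal{O}(\tau^2)\cdot\tfrac{1}{2}(\xi^{n+1}+\xi^n)\,dx \leq C_0\tau^4+\tfrac{1}{2}\norm{\xi^{n+1}}^2+\tfrac{1}{2}\norm{\xi^n}^2$, which yields the recursion $\norm{\xi^{n+1}}^2 \leq \tfrac{1+\tau}{1-\tau}\norm{\xi^n}^2 + \tfrac{2C_0}{1-\tau}\tau^5$ and requires the auxiliary recurrence result (Lemma~\ref{lem3}); the amplification factor $\bigl(\tfrac{1+\tau}{1-\tau}\bigr)^n \approx e^{2t_n}$ is then absorbed into a constant claimed to be linear in $t_n$, which strictly speaking holds only for bounded $t_n$ (harmless here since $t_n\leq T$). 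Your route instead applies Cauchy--Schwarz and factors the difference of squares, $\norm{\xi^{n+1}}^2-\norm{\xi^n}^2 = (\norm{\xi^{n+1}}-\norm{\xi^n})(\norm{\xi^{n+1}}+\norm{\xi^n})$, cancelling the common factor (with the trivial degenerate case $\norm{\xi^{n+1}}+\norm{\xi^n}=0$ handled separately) to get the Gr\"onwall-free one-step bound $\norm{\xi^{n+1}}\leq\norm{\xi^n}+\tau\norm{\sigma^n}$ and hence $\norm{\xi^n}\leq Ct_n\tau^2$ with no hidden exponential in $t_n$. What each buys: the paper's Young-plus-recurrence machinery is the same tool it reuses in Lemma~\ref{lem2} (where the residual is not uniformly small and the squared-norm recursion is genuinely needed), so it keeps the two lemmas parallel; your argument gives a sharper, purely linear-in-$t_n$ constant for this lemma and dispenses with the recurrence lemma entirely. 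Both establish the stated estimate.
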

\begin{proof}
	Firstly, we replace $u^n(x)$ in the Crank-Nicolson time differencing scheme by the evolutionary PDE's solution $u(t_n,x)$ and compare the difference.
	This can be achieved by the standard Taylor expansion techniques.
	We do Taylor expansion at the point $t_{n+\frac{1}{2}}=(n+\frac{1}{2})\tau$ to get
	\begin{equation*}
		\frac{u(t_{n+1},x)-u(t_n,x)}{\tau} = u_t(t_{n+\frac{1}{2}},x)+\mathcal{O}(\tau^2),
	\end{equation*}
	and
	\begin{equation*}
		\mathcal{N}\left[\frac{u(t_{n+1},x)+u(t_n,x)}{2}\right] = \mathcal{N}\left[u(t_{n+\frac{1}{2}},x)\right] + \mathcal{O}(\tau^2).
	\end{equation*}
	Noticing that $u(t,x)$ is satisfied with the evolutionary PDE $u_t=\mathcal{N}[u]$, we have
	\begin{equation}\label{eq:evo-CN-con}
		\frac{u(t_{n+1},x)-u(t_n,x)}{\tau} = \mathcal{N}\left[\frac{u(t_{n+1},x)+u(t_n,x)}{2}\right] + \mathcal{O}(\tau^2).
	\end{equation}
	Now subtracting \eqref{eq:evo-CN-con} from the Crank-Nicolson scheme, we obtain the relation of the propagation error $\xi^n(x) = u(t_n,x)-u^n(x)$ as
	\begin{equation}\label{eq:evo-err-xi}
		\frac{\xi^{n+1}(x)-\xi^n(x)}{\tau} = \mathcal{N}\left[\frac{u(t_{n+1},x)+u(t_n,x)}{2}\right]-\mathcal{N}\left[\frac{u^{n+1}(x) + u^n(x)}{2}\right]+\mathcal{O}(\tau^2),
	\end{equation}
	Secondly, we estimate the $L^2$ norm error estimate of $\xi^n(x)$.
	This can be achieved by the standard H$\ddot{o}$der inequality estimate techniques.
	We multiply \eqref{eq:evo-err-xi} by $\frac{1}{2}(\xi^{n+1}(x)+\xi^n(x))$ and integrate for $x$ on the domain $\Omega$.
	With Assumption 4.1 holds, we have
	\begin{align*}
		\frac{\norm{\xi^{n+1}}^2-\norm{\xi^n}^2}{2\tau} 
		&\leq \int_{\Omega}\mathcal{O}(\tau^2)\cdot \frac{\xi^{n+1}(x)+\xi^n(x)}{2} \\
		&\leq C_0\tau^4+\frac{1}{2}\norm{\xi^{n+1}}^2+\frac{1}{2}\norm{\xi^n}^2,
	\end{align*}
	for some constant $C_0$ only depends on $u(t,x)$ and its derivatives. We rearrange it to the following form
	\begin{equation*}
		\norm{\xi^{n+1}}^2 \leq \frac{1+\tau}{1-\tau}\norm{\xi^n}^2 + \frac{2C_0}{1-\tau}\tau^5.
	\end{equation*}
	Since $\xi^0(x)=0$, we apply Lemma \ref{lem3} to get
	\begin{align*}
		\norm{\xi^{n}}^2 
		&\leq \frac{2C_0\tau^5}{1-\tau}\cdot\frac{\left(\frac{1+\tau}{1-\tau}\right)^n-1}{\frac{1+\tau}{1-\tau}-1} \\
		&\leq 6C_0t_n\tau^4.
	\end{align*}
	So we have $\norm{\xi^{n}}\leq C\sqrt{t_n}\tau^2$ for some constant $C=\sqrt{6C_0}$ and we finish the proof. 
\end{proof}

\begin{lemma}\label{lem2}
	Denote $\eta^n(x) = u^n(x) - u_{\theta^n}(x)$, where $u^n(x)$ is the Crank-Nicolson time differencing discrete solution and $u_{\theta^n}(x)$ is the discrete PINN solution, then we have the estimate
	\begin{equation}
		\norm{\eta^n} \leq C\sqrt{t_n}(\max_{1\leq i\leq n}\sqrt{\mathcal{L}^i} + N_r^{\frac{1}{4}}),
	\end{equation}
\end{lemma}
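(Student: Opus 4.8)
The plan is to reproduce the energy argument of Lemma~\ref{lem1}, except that the perturbation now originates from the PINN training residual rather than the Taylor truncation error, and the genuinely new ingredient is converting the \emph{discrete} loss $\mathcal{L}^{n+1}$ into a \emph{continuous} $L^2$ bound on that residual. First I would record the two defining relations: the Crank--Nicolson iterate $u^{n+1}(x)$ satisfies \eqref{eq:evo-CN} exactly, while the PINN iterate $u_{\theta^{n+1}}(x)$ satisfies it only up to a pointwise residual
\begin{equation*}
	r^{n+1}(x) = \frac{u_{\theta^{n+1}}(x)-u_{\theta^n}(x)}{\tau} - \mathcal{N}\!\left[\frac{u_{\theta^{n+1}}(x)+u_{\theta^n}(x)}{2}\right].
\end{equation*}
Subtracting the PINN relation from the exact scheme yields, for $\eta^n(x)=u^n(x)-u_{\theta^n}(x)$, a recursion whose left side is $(\eta^{n+1}-\eta^n)/\tau$, whose nonlinear part is $\mathcal{N}[\tfrac{u^{n+1}+u^n}{2}]-\mathcal{N}[\tfrac{u_{\theta^{n+1}}+u_{\theta^n}}{2}]$, and whose forcing is $-r^{n+1}(x)$.

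Next I would run the same energy step as before: multiply this recursion by $\tfrac12(\eta^{n+1}+\eta^n)$ and integrate over $\Omega$. The crucial observation is that $\tfrac12(\eta^{n+1}+\eta^n)$ equals exactly the difference of the two arguments fed into $\mathcal{N}$, namely $\tfrac{u^{n+1}+u^n}{2}-\tfrac{u_{\theta^{n+1}}+u_{\theta^n}}{2}$; hence the monotonicity half of Assumption~\ref{ass1} (that $\int_\Omega (u_1-u_2)(\mathcal{N}(u_1)-\mathcal{N}(u_2))\,dx\le 0$) forces the nonlinear contribution to be nonpositive, and it simply drops out. This is precisely the analogue of how dissipativity was exploited in Lemma~\ref{lem1}, and any boundary mismatch from the PINN is likewise absorbed into $\mathcal{L}^{n+1}$. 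The forcing term is then controlled by Cauchy--Schwarz and Young's inequality, giving a bound of the shape $\int_\Omega(-r^{n+1})\tfrac{\eta^{n+1}+\eta^n}{2}\,dx \le \tfrac12\norm{r^{n+1}}^2 + \tfrac14\big(\norm{\eta^{n+1}}^2+\norm{\eta^n}^2\big)$.

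The step I expect to be the main obstacle is bounding $\norm{r^{n+1}}^2=\int_\Omega |r^{n+1}(x)|^2\,dx$ by the trained loss $\mathcal{L}^{n+1}=\tfrac{1}{N_r}\sum_{i}|r^{n+1}(x_r^i)|^2$, since these objects differ by a quadrature (sampling) error over the $N_r$ collocation points. Closing this gap is exactly where Assumption~\ref{ass2} is indispensable: smoothness and boundedness of $u_{\theta^n}$ together with its derivatives make the integrand $|r^{n+1}|^2$ smooth, so a standard quadrature/Monte-Carlo error estimate yields $\norm{r^{n+1}}^2 \le C\big(\mathcal{L}^{n+1} + N_r^{-1/2}\big)$, whose square root produces the $N_r^{-1/4}$ contribution appearing in the statement (which I read the printed $N_r^{1/4}$ to mean).

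Finally, assembling the three estimates produces a recursion of the form $\norm{\eta^{n+1}}^2 \le \tfrac{1+\tau}{1-\tau}\norm{\eta^n}^2 + \tfrac{C\tau}{1-\tau}\big(\mathcal{L}^{n+1}+N_r^{-1/2}\big)$, with $\norm{\eta^0}=0$ when the initial datum is imposed exactly (otherwise the initial fitting error merges harmlessly into the loss maximum). Applying the discrete Gr\"onwall inequality of Lemma~\ref{lem3} exactly as in Lemma~\ref{lem1} accumulates the per-step contributions into an overall factor proportional to $t_n$; replacing each $\mathcal{L}^i$ by $\max_{1\le i\le n}\mathcal{L}^i$ and taking square roots then gives $\norm{\eta^n}\le C\sqrt{t_n}\big(\max_{1\le i\le n}\sqrt{\mathcal{L}^i}+N_r^{-1/4}\big)$, as claimed.
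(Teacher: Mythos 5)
Your proposal follows essentially the same route as the paper's proof: define the pointwise Crank--Nicolson residual of the PINN iterates, subtract it from the exact scheme to get a recursion for $\eta^n$, kill the nonlinear term via the monotonicity half of Assumption~\ref{ass1} after testing with $\tfrac12(\eta^{n+1}+\eta^n)$, control the forcing by Young's inequality, convert the discrete loss to an $L^2$ bound on the residual via a Monte--Carlo quadrature estimate (yielding the $N_r^{-1/2}$ term), and close with the discrete Gr\"onwall recursion of Lemma~\ref{lem3} and $\eta^0=0$. Your reading of the printed $N_r^{1/4}$ as $N_r^{-1/4}$ also matches what the paper's own derivation actually produces.
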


\begin{proof}
	The PINN solution $u_{\theta^{n+1}}(x)$ is obtained by optimize the physics-informed loss $\mathcal{L}^{n+1}(\theta^{n+1})$.
	Define the residual function $\mathcal{R}^{n+1}(x)$ by 
	\begin{equation}\label{eq:evo-CN-residual}
		\mathcal{R}^{n+1}(x) = \frac{u_{\theta^{n+1}}(x)-u_{\theta^{n}}(x)}{\tau}-\mathcal{N}\left[\frac{u_{\theta^{n+1}}(x) + u_{\theta^n}(x)}{2}\right], \quad \forall x\in\Omega.
	\end{equation}
	The loss $\mathcal{L}^{n+1}(\theta^{n+1})$ is partially composed of the residual function on some randomly sampled point, so
	\begin{equation*}
		\mathcal{L}^{n+1} \geq \frac{\lambda_r}{N_r}\sum_{i=1}^{N_r}|\mathcal{R}(x_r^i)|^2.
	\end{equation*}
	By the Monte-Carlo quadrature rule in the numerical integration
	method, we can estimate the $L^2$ norm of the residual function $\mathcal{R}(x)$ by the discrete form
	\begin{align*}
		\norm{\mathcal{R}^{n+1}}^2 
		&= \int_{\Omega}|\mathcal{R}^{n+1}(x)|^2dx \\
		&\leq \frac{1}{N_r}\sum_{i=1}^{N_r}|\mathcal{R}(x_r^i)|^2 + C_1N_r^{-\frac{1}{2}} \\
		&\leq \frac{\mathcal{L}^{n+1}}{\lambda_r} + C_1N_r^{-\frac{1}{2}},
	\end{align*}
	for some constant $C_1$ depends on the regularities of the PINN solution $u_{\theta^n}(x)$.
	
	Now we turn to estimate the $L^2$ norm error estimate of $\eta^n(x)$.
	We first replace $u^n(x)$ in the Crank-Nicolson time differencing scheme by the PINN solution $u_{\theta^n}(x)$ and compare the difference.
	Subtracting \eqref{eq:evo-CN-residual} from the Crank-Nicolson scheme, we obtain the relation of the propagation error $\eta^n(x)=u^n(x)-u_{\theta^n}(x)$ as
	\begin{equation}\label{eq:evo-err-eta}
		\frac{\eta^{n+1}-\eta^n}{\tau} - \left(\mathcal{N}\left[\frac{u^{n+1}(x)+u^n(x)}{2}\right]-\mathcal{N}\left[\frac{u_{\theta^{n+1}}(x)+u_{\theta^n}(x)}{2}\right]\right) = -\mathcal{R}(x)
	\end{equation}
	Similar to the proof in Lemma \ref{lem1}, we multiply \eqref{eq:evo-err-eta} by $\frac{1}{2}(\eta^{n+1}(x)+\eta^n(x))$ and integrate for $x$ on the domain $\Omega$. With Assumption 4.1 holds, we have
	\begin{align*}
		\frac{\norm{\eta^{n+1}}^2-\norm{\eta^n}^2}{2\tau} 
		&\leq -\int_{\Omega}\mathcal{R}(x)\cdot \frac{\eta^{n+1}(x)+\eta^n(x)}{2} \\
		&\leq \frac{1}{4}\norm{\mathcal{R}^{n+1}}^2+\frac{1}{2}\norm{\eta^{n+1}}^2+\frac{1}{2}\norm{\eta^n}^2,
	\end{align*}
	then we rearrange it to the following form
	\begin{equation*}
		\norm{\eta^{n+1}}^2 \leq \frac{1+\tau}{1-\tau}\norm{\eta^n}^2 + \frac{\tau}{1-\tau}\norm{\mathcal{R}^{n+1}}^2.
	\end{equation*}
	then we apply Lemma \ref{lem3} to get
	\begin{align*}
		\norm{\eta^n}^2 
		&\leq \left(\frac{1+\tau}{1-\tau}\right)^n\norm{\eta^0}^2 + \frac{\left(\frac{1+\tau}{1-\tau}\right)^n-1}{\frac{1+\tau}{1-\tau}-1} \cdot \frac{\tau\max\limits_{1\leq i\leq n}\norm{\mathcal{R}^i}^2}{1-\tau} \\ 
		&\leq (1+6t_n)\norm{\eta^0}^2 + \frac{3t_n}{2}\max\limits_{1\leq i\leq n}\norm{\mathcal{R}^i}^2.   
	\end{align*}
	Since $\eta^0(x)=0$, we have $\norm{\eta^n} \leq C\sqrt{t_n}(\max\limits_{1\leq i\leq n}\sqrt{\mathcal{L}^i} + N_r^{\frac{1}{4}})$ for some constant $C$ and we finish the proof.
\end{proof}

\begin{lemma}\label{lem3}
	If the sequence $\{T_n\}_{n=0}^{\infty}$ satisfies the following propagation relation for some positive constant $\alpha$ and $\{\beta_n\}_{n=1}^{\infty}$:
	\begin{equation*}
		T_{n+1} \leq \alpha T_n +\beta_{n+1},\quad n\geq0,
	\end{equation*}
	then we have 
	\begin{equation*}
		T_n \leq \alpha^{n}T_0 + \frac{\alpha^{n}-1}{\alpha-1}\max_{1\leq i \leq n}\beta_i,\quad n\geq1.
	\end{equation*}
\end{lemma}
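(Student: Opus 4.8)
The plan is to treat Lemma~\ref{lem3} as a discrete Grönwall-type inequality and prove it by induction on $n$, which is cleaner than tracking the summation explicitly. For the base case $n=1$, the hypothesis gives $T_1 \leq \alpha T_0 + \beta_1$, and since $\frac{\alpha-1}{\alpha-1}=1$ and $\max_{1\leq i\leq 1}\beta_i=\beta_1$, this already matches the claimed bound (with equality). For the inductive step, I abbreviate $B_m := \max_{1\leq i\leq m}\beta_i$ and substitute first the one-step recursion $T_{n+1}\leq \alpha T_n + \beta_{n+1}$ and then the induction hypothesis, which yields
\begin{equation*}
	T_{n+1} \leq \alpha^{n+1}T_0 + \frac{\alpha(\alpha^n-1)}{\alpha-1}B_n + \beta_{n+1}.
\end{equation*}
It then only remains to fold the last two terms into the single expression $\frac{\alpha^{n+1}-1}{\alpha-1}B_{n+1}$.

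The key algebra is the following. Since $B_n \leq B_{n+1}$ and $\beta_{n+1}\leq B_{n+1}$, and since the coefficient
\begin{equation*}
	\frac{\alpha(\alpha^n-1)}{\alpha-1} = \sum_{j=1}^{n}\alpha^{j}
\end{equation*}
is nonnegative, I may replace $B_n$ and $\beta_{n+1}$ each by $B_{n+1}$ and factor it out. The telescoping identity $\alpha\cdot\frac{\alpha^n-1}{\alpha-1}+1=\frac{\alpha^{n+1}-1}{\alpha-1}$ then closes the step and completes the induction.

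As an alternative route, one can unroll the recursion directly: iterating $n$ times gives $T_n \leq \alpha^n T_0 + \sum_{k=1}^{n}\alpha^{n-k}\beta_k$, and bounding each $\beta_k$ by $B_n$ and summing the geometric series $\sum_{k=1}^{n}\alpha^{n-k}=\frac{\alpha^n-1}{\alpha-1}$ reproduces the stated estimate; I would mention this only as a sanity check on the induction.

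There is no deep obstacle here, as the lemma is elementary; the only point that genuinely needs care is the sign/monotonicity bookkeeping around the $\max$ term. Replacing $B_n$ by $B_{n+1}$ and $\beta_{n+1}$ by $B_{n+1}$ is legitimate precisely because the geometric coefficient is nonnegative, which in turn relies on $\alpha \neq 1$ (indeed the very quantity $\frac{\alpha^n-1}{\alpha-1}$ is undefined at $\alpha=1$). The one thing worth verifying is therefore that the lemma is only ever invoked with $\alpha=\frac{1+\tau}{1-\tau}>1$ for small $\tau$, as in the proofs of Lemmas~\ref{lem1} and~\ref{lem2}, so that this hypothesis is always met.
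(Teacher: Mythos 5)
Your proof is correct and is precisely the ``standard recurrence formula'' argument that the paper invokes without writing out: the paper's entire proof of Lemma~\ref{lem3} is the single sentence ``This is accomplished by a standard recurrence formula,'' and your induction (equivalently, your unrolled bound $T_n \leq \alpha^n T_0 + \sum_{k=1}^{n}\alpha^{n-k}\beta_k$ followed by the geometric sum) supplies exactly the omitted details. Your closing observation is also well taken: the bound is undefined at $\alpha=1$, and the lemma is indeed only ever applied with $\alpha=\frac{1+\tau}{1-\tau}>1$, so the nonnegativity of the geometric coefficients that your argument relies on is always available.
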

\begin{proof}
	This is accomplished by a standard recurrence formula.
\end{proof}


\subsection{Experimental Details}
In this section, we provide the details on the numerical experiments of Section 5.

\subsubsection{Neural Network Architecture}
We present two practical considerations for the PINN network architecture, which has been applied in CausualPINN \cite{wang2022respecting} and other PINN frameworks.
Although not deemed crucial for the successful application of Algorithm 1, we have empirically observed that including them can lead to further enhancements in accuracy and computational efficiency.

\paragraph{Fourier Features Embedding.} Many researchers have utilized Fourier features embedding to enhance the accuracy and generalization \cite{tancik2020fourier,wang2021eigenvector}. We employ 1-D Fourier features embedding in the following format:
\begin{equation*}
	\gamma(x)=[1,\cos(\omega x),\sin(\omega x),\cos(2\omega x),\sin(2\omega x), ... , \cos(M\omega x), \sin(M\omega x)]^T
\end{equation*}
where $\omega=2\pi/L$ and $M$ is a positive integer hyper-parameter. It maps the input data to a higher dimensional space by Fourier transforms. The major advantage of this technique is that it improves the model's ability to approximate periodic or oscillatory behavior in the input data. It allows us to satisfy the periodic boundary condition as
\begin{equation*}
	g(x_b^i)=g(x_b^i+L)
\end{equation*}
where $L$ represents the period of the periodic boundary condition. Furthermore, for the two-dimensional Navier-Stokes equation, the Fourier feature embedding takes the following form
\begin{equation*}
	\gamma(x)=\left[
	\begin{aligned}
		&1 \\
		&\cos(\omega_x x), ... , \cos(M\omega_x x)\\
		&\cos(\omega_y y), ... , \cos(M\omega_y y)\\
		&\sin(\omega_x x), ... , \sin(M\omega_x x)\\
		&\sin(\omega_y y), ... , \sin(M\omega_y y)\\
		&\cos(\omega_x x)\cos(\omega_y y), ... , \cos(M\omega_x x)\cos(M\omega_y y) \\
		&\cos(\omega_x x)\sin(\omega_y y), ... , \cos(M\omega_x x)\sin(M\omega_y y) \\
		&\sin(\omega_x x)\cos(\omega_y y), ... , \sin(M\omega_x x)\cos(M\omega_y y) \\
		&\sin(\omega_x x)\sin(\omega_y y), ... , \sin(M\omega_x x)\sin(M\omega_y y) \\
	\end{aligned}
	\right]
\end{equation*}
Previous studies \cite{lu2021physics,sukumar2022exact} have shown that this method can generally be applied to any problem that exhibits periodic or oscillatory behavior, regardless of the particular boundary conditions involved.
For instance, Fourier feature embedding can be employed to solve problems with Dirichlet boundary conditions in which the solution is specified at the boundary (or Neumann boundary conditions in which the solution's derivative is specified at the boundary). In such a scenario, the embedding technique can be used to capture the periodic and oscillatory behavior of the input data, while the neural network can be trained to satisfy the Dirichlet boundary conditions (or Neumann boundary conditions). 

\paragraph{Modified Multi-layer Perceptrons.} In recent researches \cite{wang2022respecting, wang2021understanding}, ``modified MLP'', a novel multi-layer perceptron architecture, has been proposed. Compared to conventional multi-layer perceptrons, the ``modified MLP'' demonstrates superior performance because it excels at capturing steep gradients and minimizing residuals of partial differential equations. The form of this architecture is given as:
\begin{eqnarray}
	\left\{
	\begin{array}{l}
		U = \sigma(XW_u+b_u), \\
		V = \sigma(XW_v+b_v), \\
		H_{(1)} = \sigma(XW_{(0)} + b_{(0)}), \\
		Z_{(n)} = \sigma(H_{(n)}W_{(n)} + b_{(n)}), \quad n=1, 2, ... , D-1. \\
		H_{(n+1)} = (1-Z_{(n)}) \odot U + Z_{(n)}\odot V, \quad n=1, 2, ... , D-1. \\
		u_{\theta}(X) = H_{(D)}W_{(D)} + b_{(D)}. \\
	\end{array}
	\right.
\end{eqnarray}
where $\sigma(\cdot)$ represents activation function ($\tanh(\cdot)$ in this work); the trainable parameters of the neural network are indicated by $W_u, W_v, W_{(n)}, b_u, b_v, b_{(n)}$; $D$ represents the depth of neural network; and $\odot$ denotes the operation of point-wise multiplication. The use of skip connections or residual connections is a significant distinction between ``modified MLP'' and conventional MLP. These connections enable the network to bypass certain layers and transmit information directly from earlier layers to later layers. 

\paragraph{Multiple Neural Networks. } For PINN with backward Euler or Crank-Nicolson time differencing, the neural network has the form of single input $x$ and single output $u_{\theta}(x)$.
However, for the general form of Runge-Kutta with $q$ stages, we have multiple outputs $[k_1(x),k_2(x),\cdots,k_q(x),u^{n+1}(x)]$.
While it is possible to use a single neural network with multiple outputs for the PINN approximation, this approach may lead to slow convergence. This is because the hidden function $k_i(x)$ can differ in scale from the solution $u^{n+1}(x)$.
Instead, we use $q+1$ neural networks to separately approximate $k_1(x),k_2(x),\cdots,k_q(x),u^{n+1}(x)$. Although this approach leads to an increase in the number of neural network parameters, it greatly enhances both the training efficiency and accuracy.

\subsubsection{Configuration of Training}
\paragraph{Error metric}
To quantify the performance of our methods, we apply a relative $L^2$ norm over the spatial-temporal domain:
\begin{equation}
	\text{relative $L^2$ error} = \sqrt{\frac{\sum_{n=1}^{N_t}\sum_{i=1}^{N_r}|u_{\theta^n}(x_i)-u(t_n,x_i)|^2}{\sum_{n=1}^{N_t}\sum_{i=1}^{N_r}u(t_n,x_i)^2}}
\end{equation}

\paragraph{Neural networks and training parameters}
In all examples, the Fourier feature embedding is applied and the modified MLP is used. Multiple neural networks are used in our TL-DPINN2 method while a single neural network is used in our TL-DPINN1 method. Adam optimizer with an initial learning rate of 0.001 and exponential rate decay is used.
More details about the hyper-parameters of neural networks and the hyper-parameters of Algorithm 1 are presented in Table~\ref{tab_setting}.

\begin{table}[h]
	\centering
	\caption{Detailed experimental settings of Section 5.}
	\label{tab_setting}
	\begin{tabular}{cccccccc}
		\toprule
		Equations  & Depth	& Width & Features $M$	&  $N_t$	& $N_r$ & Iterations ($M_0$,$M_1$) & $\epsilon$	\\\midrule
		RD	& 4	& 128 & 10 & 200 & 512 & ($10000$,$1000$) & 1e-9 \\
		AC	& 4	& 128 & 10 & 200 & 512 & ($10000$,$2000$) & 1e-10 \\
		KS(regular) & 3 & 256 & 5 & 250 & 500 & ($10000$,$3000$) & 1e-8 \\
		KS(chaotic) & 8 & 128 & 5 & 250 & 500 & ($10000$,$7000$) & 1e-10 \\
		NS & 4	& 128 & 5 & 100 & 100 & ($10000$,$5000$) & 1e-5
		\\\bottomrule
	\end{tabular}
\end{table}
For the configuration of other five baselines: 1) original PINN \cite{raissi2019physics}; 2) adaptive sampling \cite{wight2021solving}; 3)  self-attention \cite{mcclenny2023self}; 4) time marching \cite{mattey2022novel} and 5) causal PINN \cite{wang2022respecting},
all of them have a neural network size with the same width and 1 deeper depth than that in Table~\ref{tab_setting}.
The collocation points number for all five baselines are configured to be $N_t\times N_r$ in Table~\ref{tab_setting}.
For example, a continuous original PINN has size $[2,128,128,128,128,128,1]$ and $200\times512$ collocation points on the space-time domain to compute the loss, then each discrete PINN has the smaller size $[1,128,128,128,128,1]$ and much smaller collocation points $512$ on space domain. 
The total parameters and computation of $200$ discrete PINNs and the computation on the loss calculation are about the same with a single continuous PINN.
In this configuration, we can sure that the comparison between our TL-DPINNs and other five baselines is fair, showing the discrete PINNs are efficient for practical applications.

\subsubsection{Additional results for Reaction-Diffusion equation}
Figure~\ref{fig_train_RD} (a) depicts how the $L^{2}$ error changes as time goes on, as we can see, the $L^{2}$ error increases in the early training steps and is kept at a stable level between $1.00e-05$ and $5.00e-05$ later. As shown in Figure~\ref{fig_train_RD} (b), based on the trainable parameters of the preceding time stamp, only a few hundred steps of training are required for each time stamp to satisfy the early stopping criterion, and then move to the training of the next time stamp. 
Figure~\ref{fig_loss_RD} shows the training loss at different time steps.
Figure~\ref{fig_timestep_RD} compares the predicted and reference solutions at different time instants. The predictions given by our method are identical to the reference solutions. 

\begin{figure}
	\centering
	\subfloat[Relative $L^2$ errors]{\includegraphics[width=0.45\textwidth]{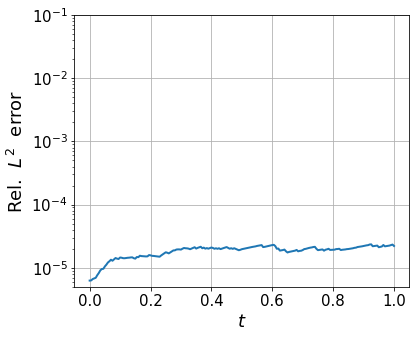}}
	\qquad 
	\subfloat[Training epochs]{\includegraphics[width=0.45\textwidth]{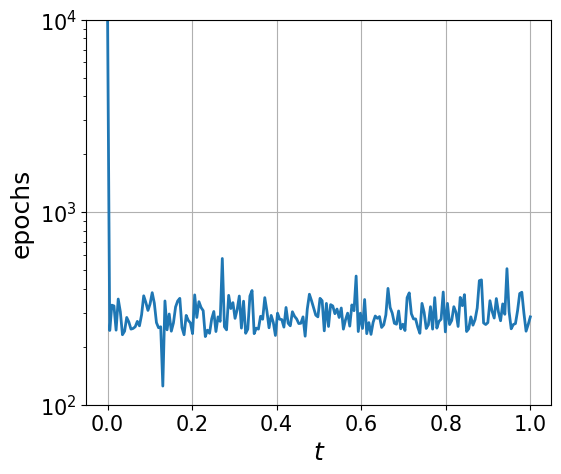}}
	\caption{Training results for the Reaction-Diffusion equation.}
	\label{fig_train_RD}
\end{figure}

\begin{figure}
	\centering
	\includegraphics[width=1.0\textwidth]{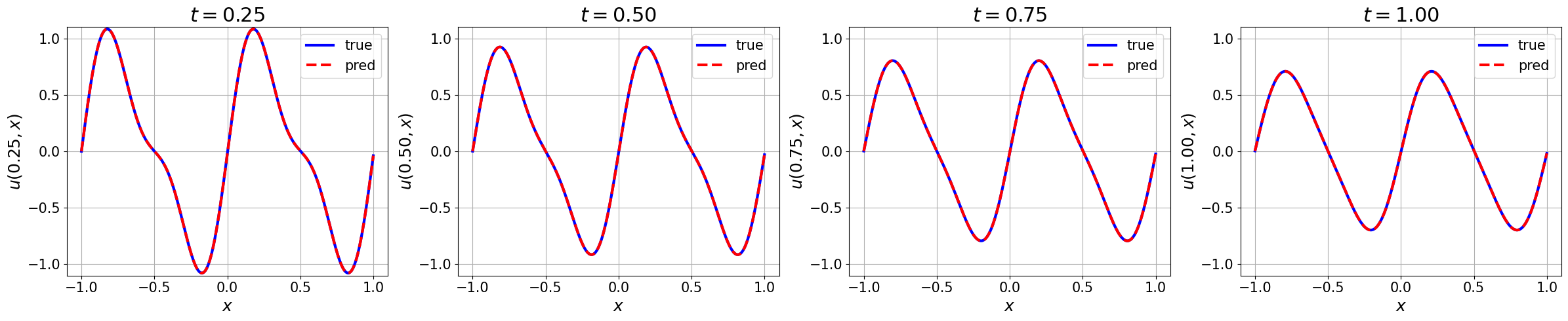}
	\caption{Comparison between the predicted and reference solutions at different time instants for the Reaction-Diffusion equation.}
	\label{fig_timestep_RD}
\end{figure}

\begin{figure}
	\centering
	\includegraphics[width=1.0\textwidth]{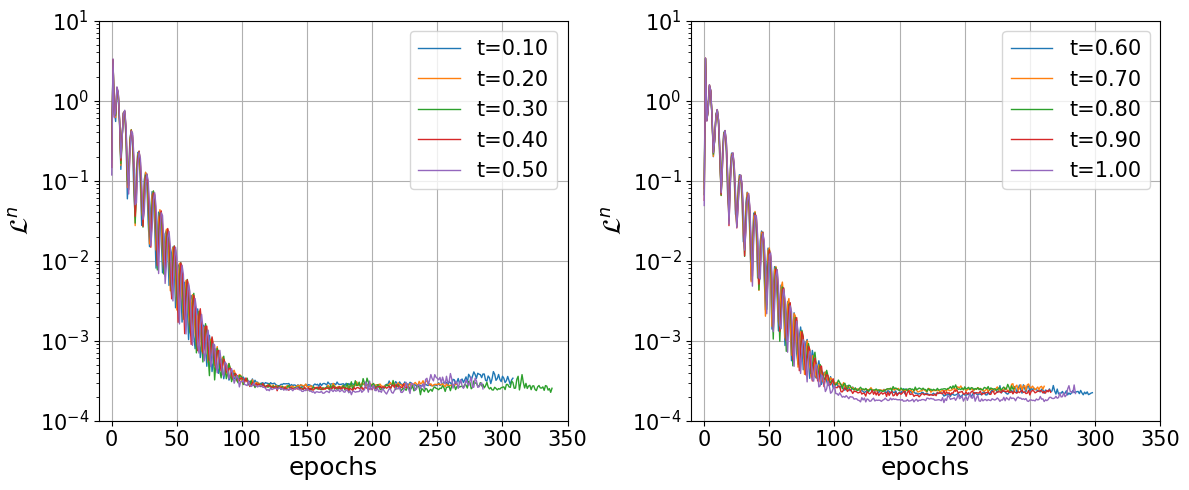}
	\caption{Loss curves at different time steps for the Reaction-Diffusion equation.}
	\label{fig_loss_RD}
\end{figure}

\subsection{Additional results for Allen-Cahn equation}
Figure~\ref{fig_contrast_AC} shows the predicted solution against the reference solution, our proposed method achieves a relative $L^{2}$ error of $5.92e-04$. Figure~\ref{fig_timestep_AC} presents the comparison between the reference and the predicted solutions at given time instants $t= 0.25, 0.50, 0.75, 1.00$. As time goes on, our method is capable of exactly fitting the evolutionary reference solution.

\begin{figure}
	\centering
	\includegraphics[width=1.0\textwidth]{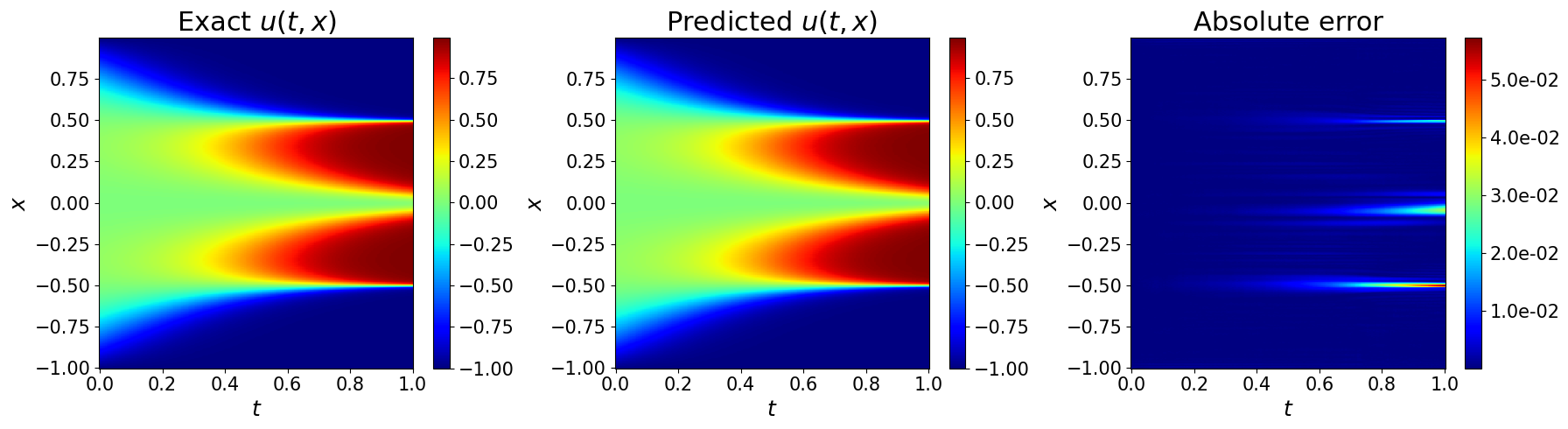}
	\caption{Comparison between the reference and predicted solutions for the Allen-Cahn equation, and the $L^{2}$ error is $4.04e-03$.}
	\label{fig_contrast_AC}
\end{figure}

\begin{figure}
	\centering
	\includegraphics[width=1.0\textwidth]{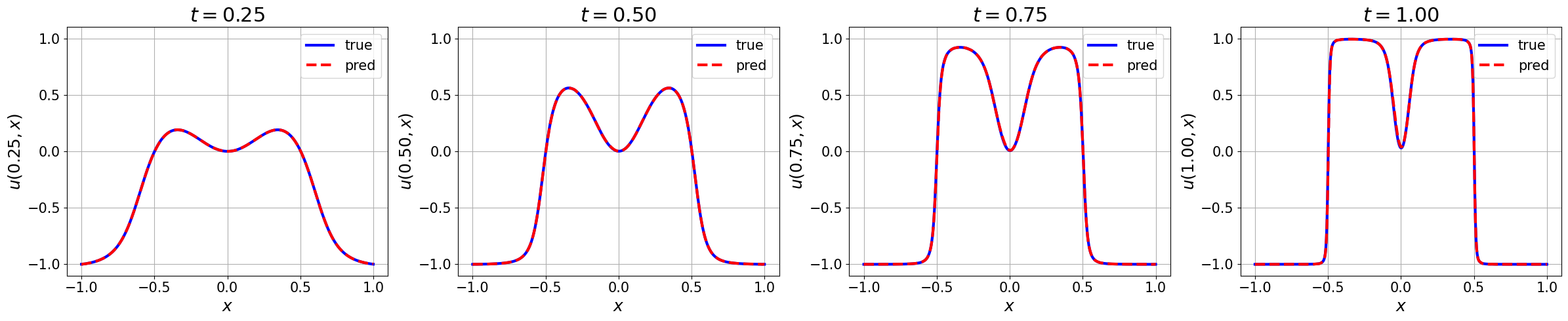}
	\caption{Comparison between the predicted and reference solutions at different time instants for the Allen-Cahn equation.}
	\label{fig_timestep_AC}
\end{figure}

\subsection{Additional results for Kuramoto–Sivashinsky equation}
\paragraph{Regular.}
The example presented in Section 5.3 shows a relatively regular solution. From Figure~\ref{fig_train_KSr} (a), we can figure out how the $L^{2}$ error changes with the evolution of the equation. The $L^{2}$ error is relatively small in the early time stamps compared with the $L^{2}$ error in later time stamps for the solution happens to experience a fast transition as time goes on. Figure~\ref{fig_train_KSr} (b) represents the training epochs required at different time steps. The KS equation tends to become complex at around $t=0.5$, leading to a drastic surge in demand for training epochs. Figure~\ref{fig_timestep_KSr} presents the comparison between the reference and the predicted solutions at different time moments $t = 0.2, 0.4, 0.6, 0.8, 1.0$, and it is clear that our predicted solution is highly consistent with the reference solution. 

\begin{figure}
	\centering
	\subfloat[Relative $L^2$ errors]{\includegraphics[width=0.45\textwidth]{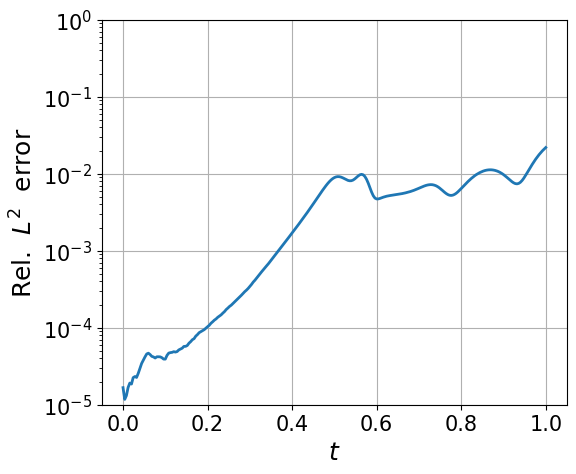}}
	\qquad 
	\subfloat[Training epochs]{\includegraphics[width=0.45\textwidth]{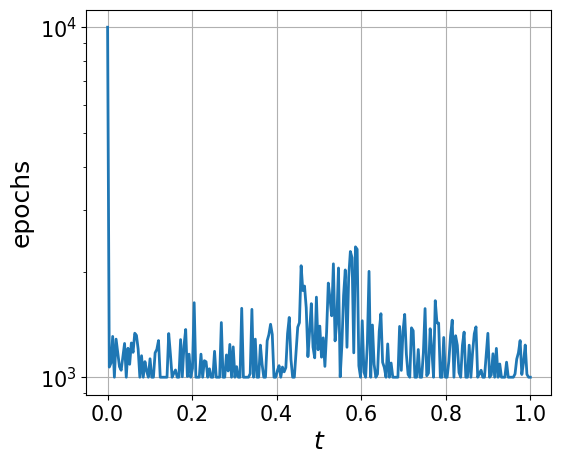}}
	\caption{Training results for the Kuramoto–Sivashinsky (regular) equation.}
	\label{fig_train_KSr}
\end{figure}

\begin{figure}
	\centering
	\includegraphics[width=1.0\textwidth]{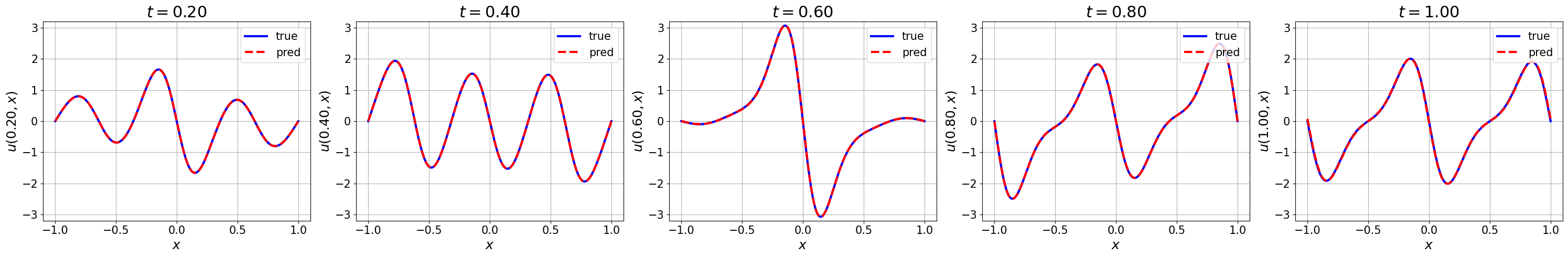}
	\caption{Comparison between the predicted and reference solutions at different time instants for the Kuramoto–Sivashinsky (regular) equation.}
	\label{fig_timestep_KSr}
\end{figure}

\begin{figure}
	\centering
	\includegraphics[width=1.0\textwidth]{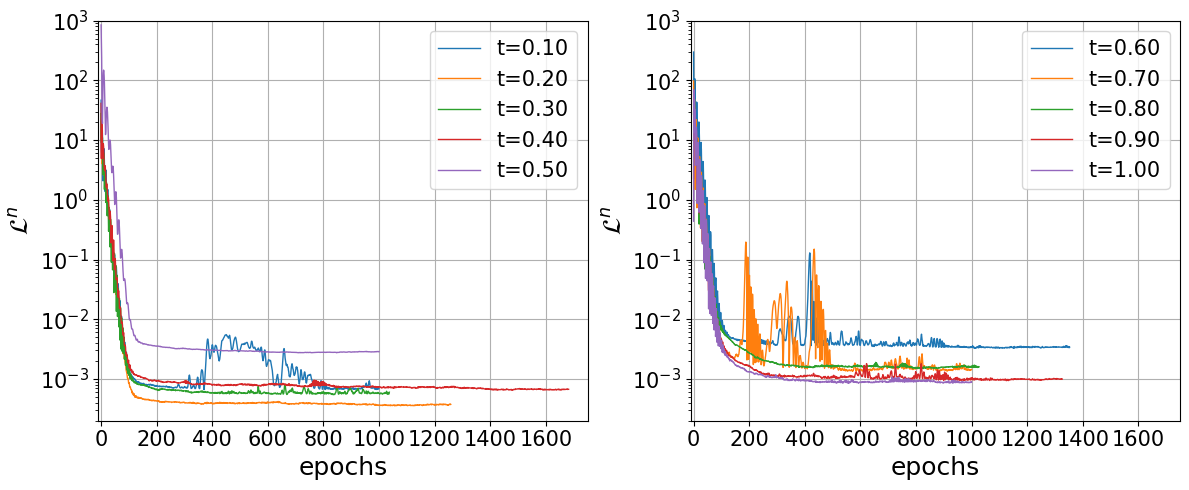}
	\caption{Loss curves at different time steps for the Kuramoto–Sivashinsky (regular) equation.}
	\label{fig_loss_KSr}
\end{figure}

\paragraph{Chaotic.}
We consider using the Kuramoto-Sivashinsky equation to describe more complex chaotic phenomena, in which $\alpha = 100/16$, $\beta = 100/16^2$, $\gamma = 100/16^4$, and the initial condition $u_{0}(x)=\cos(x)(1+\sin(x))$. The comparison between the reference and the predicted solution is visualized in Figure~\ref{fig_contrast_KSc}. As discussed in the previous section, PINN has difficulty learning sharp features for a larger number of evolutionary equations. However, our proposed method can learn solutions to chaotic phenomena. Our proposed method gives a relative $L^{2}$ error of $3.74e-01$, whose variation trend is shown in Figure~\ref{fig_train_KSc} (a). As shown in Figure~\ref{fig_train_KSc} (b), with the reference solution becoming complex later in the training process, the maximum of the training epoch is always reached.

From a critical standpoint, here we should also mention that difficulties can still arise in simulating the long-time behavior of chaotic systems.
We observe that our predicted solution accurately captures the transition to chaos at around $t = 0.4$, while eventually losing accuracy after $t = 0.8$ as depicted in Figure~\ref{fig_contrast_KSc}, as well as in CasualPINN \cite{wang2022respecting}.
Figure~\ref{fig_timestep_KSc} depicts the comparison between the predicted and reference solution at different time instants.
From $t=0.4$, our method has difficulty in fitting the reference solution exactly and the contrast in the final state is even worse.
This may be due to the chaotic nature of the problem and the inevitable numerical error accumulation of PINNs, which have appeared and been discussed in \cite{wang2022respecting}.

\begin{figure}
	\centering
	\includegraphics[width=1.0\textwidth]{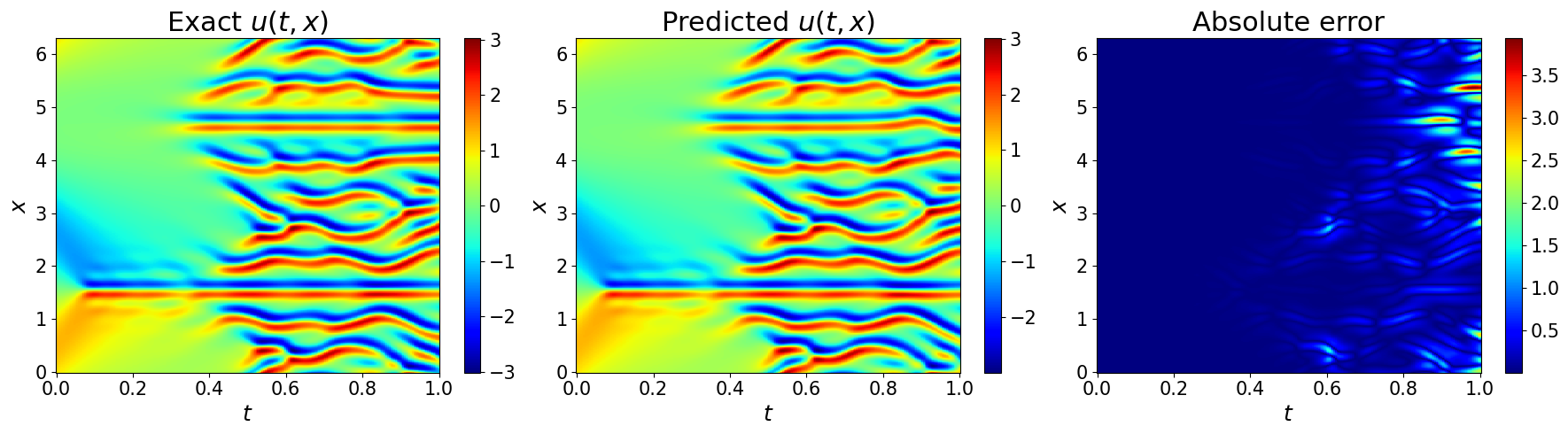}
	\caption{Comparison between the reference and predicted solutions for the Kuramoto–Sivashinsky(chaotic) equation, and the $L^{2}$ error is $3.74e-01$.}
	\label{fig_contrast_KSc}
\end{figure}

\begin{figure}
	\centering
	\subfloat[Relative $L^2$ errors]{\includegraphics[width=0.45\textwidth]{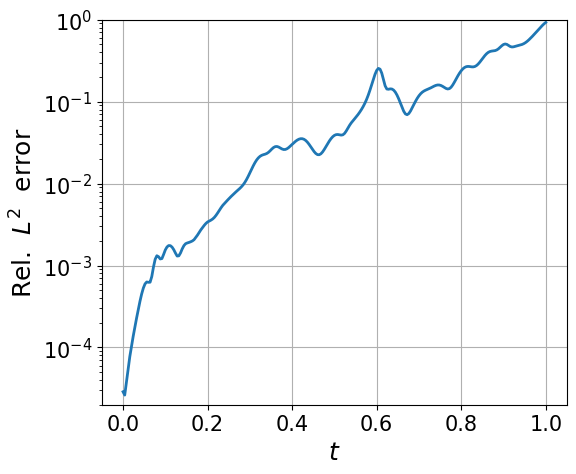}}
	\qquad 
	\subfloat[Training epochs]{\includegraphics[width=0.45\textwidth]{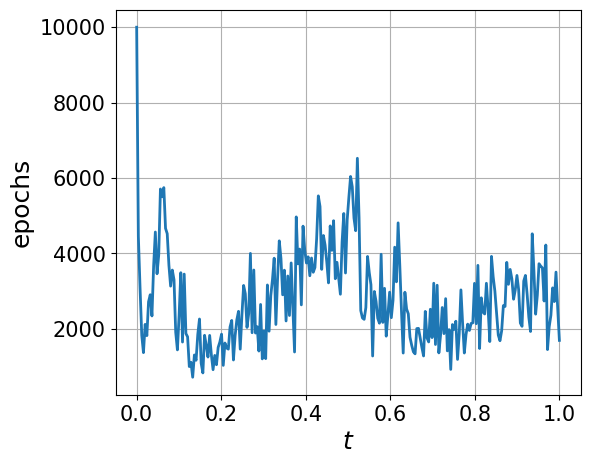}}
	\caption{Training results for the Kuramoto–Sivashinsky (chaotic) equation.}
	\label{fig_train_KSc}
\end{figure}

\begin{figure}
	\centering
	\includegraphics[width=1.0\textwidth]{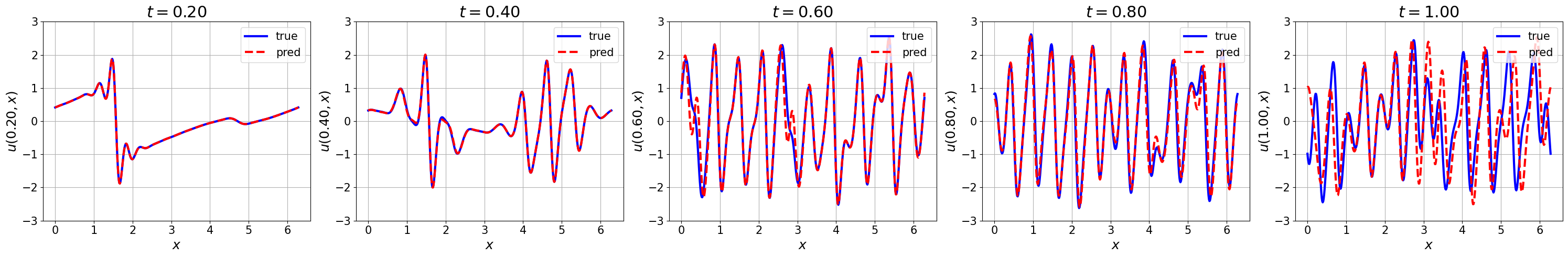}
	\caption{Comparison between the predicted and reference solutions at different time instants for the Kuramoto–Sivashinsky(chaotic) equation.}
	\label{fig_timestep_KSc}
\end{figure}

\begin{figure}
	\centering
	\includegraphics[width=1.0\textwidth]{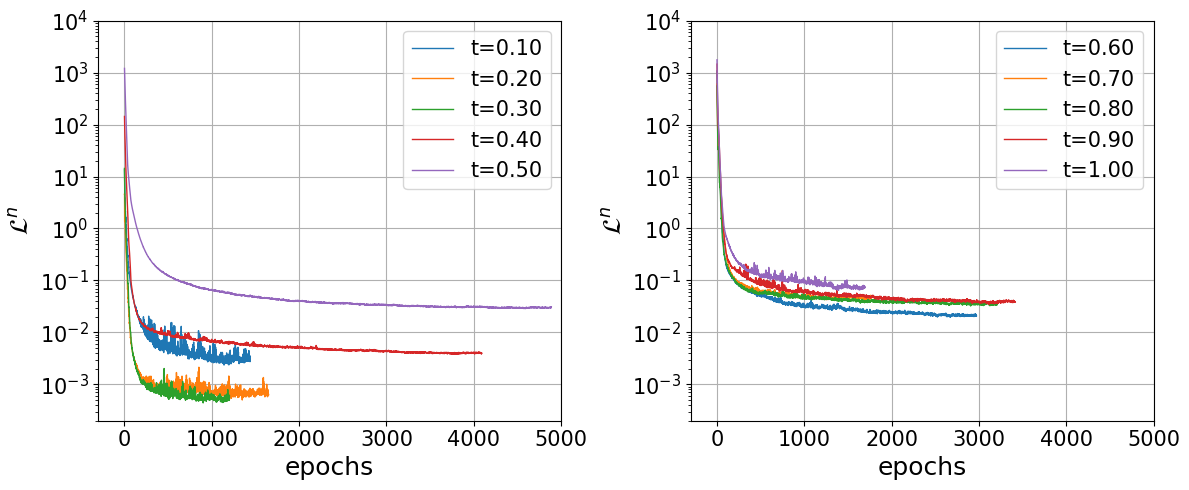}
	\caption{Loss curves at different time steps for the Kuramoto–Sivashinsky (chaotic) equation.}
	\label{fig_loss_KSc}
\end{figure}

\subsubsection{Additional results for Navier-Stokes equation}
Our method is effective in solving NS Eq. with turbulence behavior.
As shown in Figure~\ref{fig_train_NS}, only one thousand training epochs are required on average for each timestamp to converge.
Figure~\ref{fig_timestep_NS} shows additional comparisons of $w(t,x,y)$ at different time stamps. As time passes, both the absolute error and the $L^2$ error between the reference and predicted $w(t,x,y)$ increase gradually. Figure~\ref{fig_loss_NS} shows how the loss value decreases at different timestamps, where $\mathcal{L}^n_{\rm w}$ is the loss for the equation $w_t+u_{\theta^n}\cdot\nabla w-\frac{1}{\mathrm{Re}}\Delta w=0$, and $\mathcal{L}^n_{\rm c}$ for the equation $\nabla \cdot u_{\theta^n}=0$.

\begin{figure}
	\centering
	\subfloat[Relative $L^2$ errors]{\includegraphics[width=0.4\textwidth]{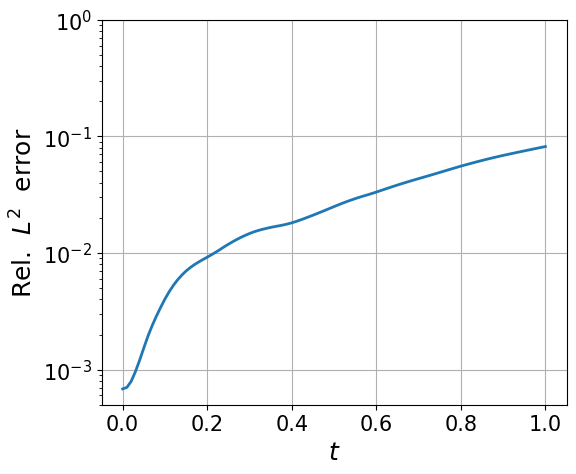}}
	\qquad\qquad
	\subfloat[Training epochs]{\includegraphics[width=0.4\textwidth]{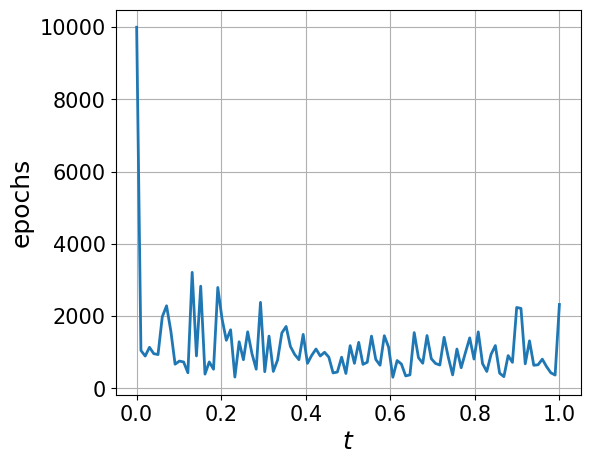}}
	\caption{Training results for the Navier-Stokes equation.}
	\label{fig_train_NS}
\end{figure}

\begin{figure}
	\centering
	\includegraphics[width=1.0\textwidth]{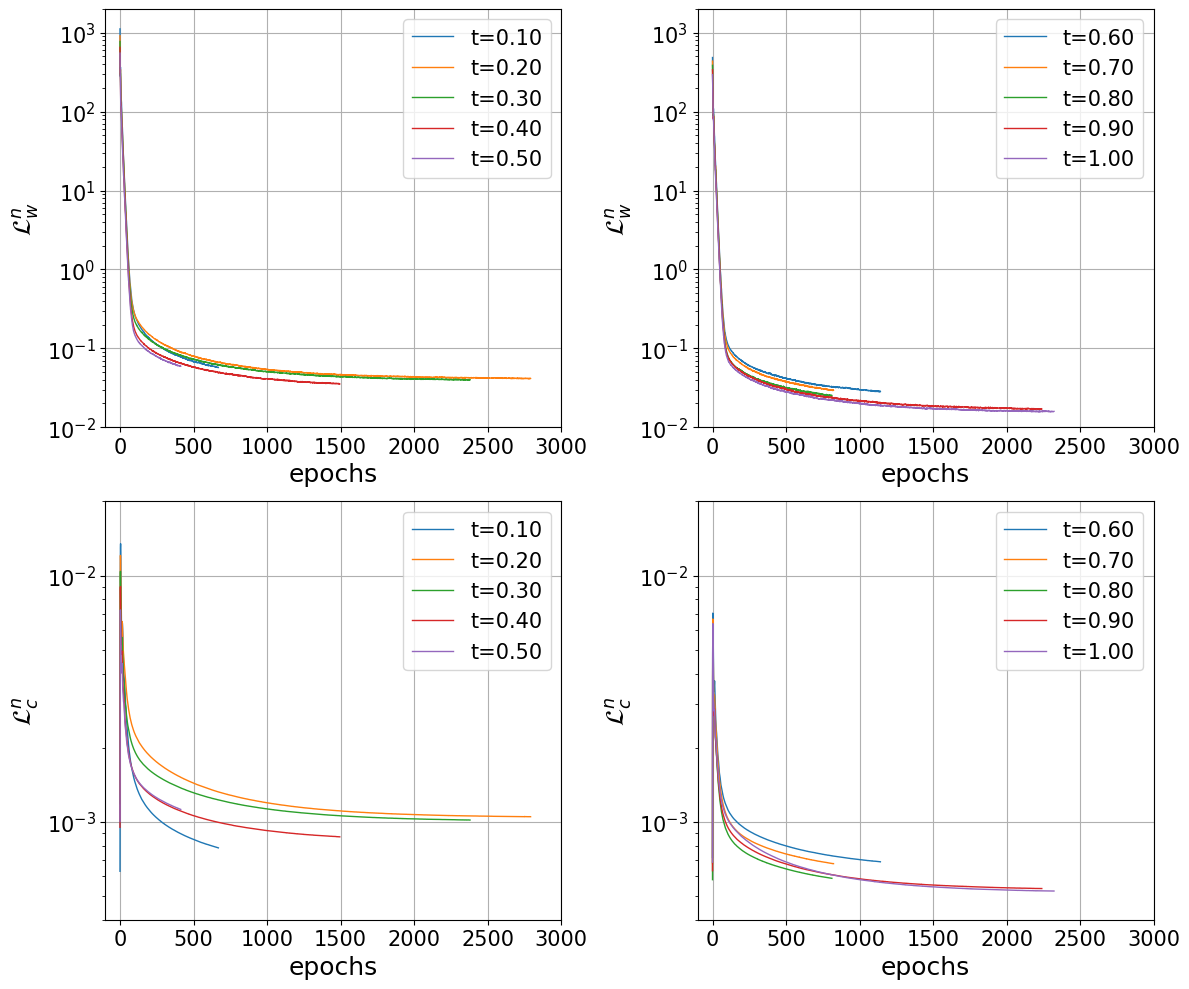}
	\caption{Loss curves at different time steps for the Navier-Stokes equation.}
	\label{fig_loss_NS}
\end{figure}

\begin{figure}
	\centering
	\includegraphics[width=0.97\textwidth]{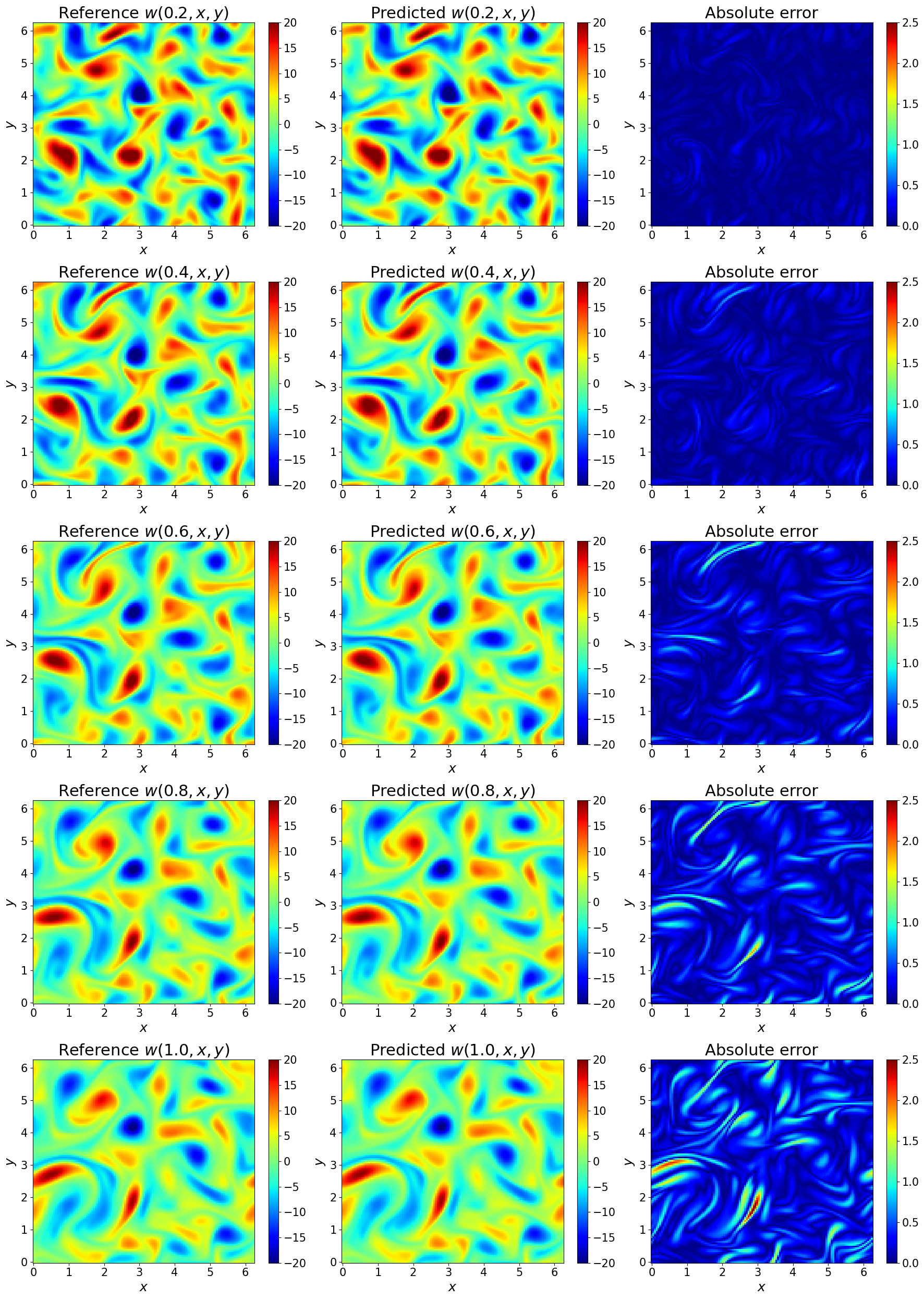}
	\caption{Comparison between the reference and predicted solutions of $w(t, x, y)$ for the Navier-Stokes equation at $t=0.2, 0.4, 0.6, 0.8, 1.0$.}
	\label{fig_timestep_NS}
\end{figure}

\end{document}